\def\eqref#1{equation~\ref{#1}}
\def\1{\bm{1}}
\DeclareMathAlphabet{\mathsfit}{\encodingdefault}{\sfdefault}{m}{sl}
\SetMathAlphabet{\mathsfit}{bold}{\encodingdefault}{\sfdefault}{bx}{n}
\newcommand{\E}{\mathbb{E}}
\DeclareMathOperator*{\argmax}{arg\,max}
\newtheorem{lemma}{Lemma}
\newcommand\fullname{{Optimistic Pessimistically Initialised $Q$-Learning}}
\newcommand\name{{OPIQ}}
\title{Optimistic Exploration Even With\\A Pessimistic Initialisation}
\author{Tabish Rashid,\ Bei Peng,\ Wendelin B\"{o}hmer,\ Shimon Whiteson\\
University of Oxford\\
Department of Computer Science\\
\texttt{\{tabish.rashid, bei.peng,}\\\texttt{wendelin.boehmer, shimon.whiteson\}@cs.ox.ac.uk}
}
\begin{document}

\maketitle

\begin{abstract}
    Optimistic initialisation is an effective strategy for efficient exploration in reinforcement learning (RL). In the tabular case, all provably efficient model-free algorithms rely on it.
However, model-free deep RL algorithms do not use optimistic initialisation despite taking inspiration from these provably efficient tabular algorithms. 
In particular, in scenarios with only positive rewards, $Q$-values are initialised at their lowest possible values due to commonly used network initialisation schemes, a \textit{pessimistic initialisation}. 
Merely initialising the network to output optimistic $Q$-values is not enough, since we cannot ensure that they remain optimistic for novel state-action pairs, which is crucial for exploration. 
We propose a simple count-based augmentation to pessimistically initialised $Q$-values that separates the source of optimism from the neural network. 
We show that this scheme is provably efficient in the tabular setting and extend it to the deep RL setting. 
Our algorithm, \fullname{} (\name{}), augments the $Q$-value estimates of a DQN-based agent with count-derived bonuses to ensure optimism during both action selection and bootstrapping. 
We show that \name{} outperforms non-optimistic DQN variants that utilise a pseudocount-based intrinsic motivation in hard exploration tasks, and that it predicts optimistic estimates for novel state-action pairs.
\end{abstract}

\section{Introduction}

In reinforcement learning (RL), exploration is crucial for gathering 
sufficient data to infer a good control policy.
As environment complexity grows, exploration becomes more challenging and simple randomisation strategies become inefficient.

While most provably efficient methods for tabular RL are model-based \citep{R_MAX, Model_Based_Interval_Estimation, azar2017minimax}, 
in deep RL, learning models that are useful for planning is notoriously difficult and often more complex \citep{hafner2018learning} than model-free methods.  Consequently, model-free approaches have shown the best final performance on large complex tasks \citep{mnih2015human, mnih2016asynchronous, hessel2018rainbow}, especially those requiring hard exploration \citep{bellemare2016unifying, Neural_Density_Models}. 
Therefore, in this paper, we focus on how to devise model-free RL algorithms for efficient exploration that scale to large complex state spaces and have strong theoretical underpinnings. 

Despite taking inspiration from tabular algorithms, current model-free approaches to exploration in deep RL do not employ \emph{optimistic initialisation}, which is crucial to provably efficient exploration in all model-free tabular algorithms. 
This is because deep RL algorithms do not pay special attention to the initialisation of the neural networks and instead use common initialisation schemes that yield initial $Q$-values around zero. 
In the common case of non-negative rewards, this means $Q$-values are initialised to their lowest possible values, i.e., a \emph{pessimistic initialisation}. 

While initialising a neural network optimistically
would be trivial, e.g., by setting the bias of the final layer of the network,
the uncontrolled generalisation in neural networks 
changes this initialisation quickly.
Instead, to benefit exploration, we require the $Q$-values for novel state-action pairs 
must remain high until they are explored.  

An empirically successful approach to exploration in deep RL, especially when reward is sparse, is \emph{intrinsic motivation} \citep{oudeyer2009intrinsic}. 
A popular variant is based on \emph{pseudocounts} \citep{bellemare2016unifying}, which derive an intrinsic bonus from approximate visitation counts over states and is inspired by the tabular MBIE-EB algorithm \citep{Model_Based_Interval_Estimation}.
However, adding a positive intrinsic bonus to the reward
 yields optimistic $Q$-values only for state-action pairs 
that have already been chosen sufficiently often.
Incentives to explore unvisited states rely therefore
on the generalisation of the neural network.
Exactly how the network generalises to those novel state-action pairs is unknown, and thus it is unclear whether those estimates are optimistic when compared to nearby visited state-action pairs.

\begin{wrapfigure}{l}{0.17\textwidth}
\centering
\vspace{-4mm}
\centering
\includegraphics[width=0.17\textwidth]{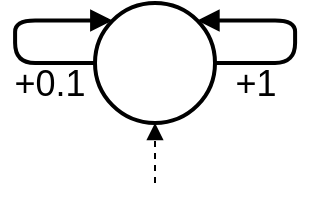}
\vspace{-6mm}
\caption{}
\label{fig:intro_1state}
\vspace{-1mm}
\end{wrapfigure}
Consider the simple example with a single state 
and two actions shown in Figure \ref{fig:intro_1state}. 
The left action yields $+0.1$ reward and the right action yields $+1$ reward. 
An agent whose $Q$-value estimates have been zero-initialised 
must at the first time step select an action randomly. 
As both actions are underestimated,
this will increase the estimate of the chosen action.
Greedy agents always pick the action 
with the largest $Q$-value estimate 
and will select the same action forever, 
failing to explore the alternative.
Whether the agent learns the optimal policy or not 
is thus decided purely at random based on the initial $Q$-value estimates. 
This effect will only be amplified by intrinsic reward.

To ensure optimism in unvisited, novel state-action pairs, we introduce \fullname{} (\name{}). \name{} does not rely on an optimistic initialisation to ensure efficient exploration, but instead augments the  $Q$-value estimates with count-based bonuses in the following manner:
\vspace{-2mm}
\begin{equation} \label{eq:q_plus}
    Q^+(s,a) := Q(s,a) + 
    \frac{C}{(N(s,a) + 1)^M},
\end{equation}
where $N(s,a)$ is the number of times a state-action pair has been visited and $M,C>0$ are hyperparameters.
These $Q^+$-values are then used for both action selection and during bootstrapping, unlike the above methods which only utilise $Q$-values during these steps. 
This allows \name{} to maintain optimism when selecting actions and bootstrapping, since the $Q^+$-values can be optimistic even when the $Q$-values are not. 

In the tabular domain, we base \name{} on UCB-H \citep{jin2018q}, a simple online $Q$-learning algorithm that uses count-based intrinsic rewards and optimistic initialisation. 
Instead of optimistically initialising the $Q$-values, we pessimistically initialise them and use $Q^+$-values during action selection and bootstrapping. 
Pessimistic initialisation is used to enable a worst case analysis where all of our $Q$-value estimates underestimate $Q^*$ and is \textbf{not} a requirement for \name{}.
We show that these modifications retain the theoretical guarantees of UCB-H.

Furthermore, our algorithm easily extends to the Deep RL setting.
The primary difficulty lies in obtaining appropriate state-action counts in high-dimensional and/or continuous state spaces, which has been tackled by a variety of approaches \citep{bellemare2016unifying,Neural_Density_Models,tang2017exploration,machado2018count} and is orthogonal to our contributions.
 
We demonstrate clear performance improvements in sparse reward tasks over 1) a baseline DQN that just uses intrinsic motivation derived from the approximate counts, 2) simpler schemes that aim for an optimistic initialisation when using neural networks, and 3) strong exploration baselines.
We show the importance of optimism during action selection for ensuring efficient exploration. 
Visualising the predicted $Q^+$-values shows that they are indeed optimistic for novel state-action pairs.

\vspace{-2mm}
\section{Background}
\label{sec:bg_deep}

We consider a Markov Decision Process (MDP) defined as a tuple ($\mathcal{S}, \mathcal{A}, P, R)$, where $\mathcal{S}$ is the state space, $\mathcal{A}$ is the discrete action space, $P(\cdot|s,a)$ is the state-transition distribution, $R(\cdot|s,a)$ is the distribution over rewards and $\gamma \in [0,1)$ is the discount factor.
The goal of the agent is then to maximise 
the expected discounted sum of rewards: 
$\E[\sum_{t=0}^{\infty}\gamma^t r_t| r_t \sim R(\cdot | s_t, a_t)]$,
in the discounted episodic setting.
A policy $\pi(\cdot|s)$ is a mapping from states to actions such that it is a valid probability distribution.
$Q^{\pi}(s,a) := \E[\sum_{t=0}^{\infty}\gamma^t r_t| a_t \sim \pi(\cdot|s_t)]$ and $Q^* := \max_{\pi} Q^\pi$.

Deep $Q$-Network (DQN)~\citep{mnih2015human} 
uses a nonlinear function approximator (a deep neural network) 
to estimate the action-value function, 
$Q(s, a; \theta) \approx Q^*(s, a)$, 
where $\theta$ are the parameters of the network. 
Exploration based on intrinsic rewards
\citep[e.g.,][]{bellemare2016unifying}, which uses a DQN agent, 
additionally augments the observed rewards $r_t$ 
with a bonus $\beta / \sqrt{N(s_t,a_t)}$
based on pseudo-visitation-counts $N(s_t, a_t)$. 
The DQN parameters $\theta$ are trained by gradient descent 
on the mean squared regression loss $\mathcal{L}$ 
with bootstrapped `target' $y_t$:
\vspace{-2mm}
\begin{equation} \label{eq:dqn_loss} 
	\mathcal{L}[\theta] \, := \,
	\E\!\Big[ 
		\Big(\overbrace{r_t + \!{\textstyle \frac{\beta}{\sqrt{N(s_t,a_t)}}}
		+ \gamma \max_{a'} Q(s_{t+1}, a'; \theta^{-}) }^{y_t}\!
		- Q(s_t, a_t; \theta) \Big)^{\!2}
		\Big|\, (s_t, a_t, r_t, s_{t+1}) \!\sim\! D
	\Big] . \!\!
\end{equation}
The expectation is estimated with uniform samples 
from a replay buffer $D$~\citep{lin1992self}.
$D$ stores past transitions $(s_{t}, a_{t}, r_{t}, s_{t+1})$, 
where the state $s_{t+1}$ is observed 
after taking the action $a_{t}$ in state $s_{t}$ 
and receiving reward $r_{t}$. 
To improve stability, DQN uses a target network, 
parameterised by $\theta^{-}$, 
which is periodically copied from the regular network 
and kept fixed for a number of iterations. 

\section{\fullname{}}
Our method \fullname{} (\name{}) ensures optimism in the $Q$-value estimates of unvisited, novel state-action pairs in order to drive exploration. 
This is achieved by augmenting the $Q$-value estimates in the following manner:
\begin{equation*}
    Q^+(s,a) := Q(s,a) + 
    \frac{C}{(N(s,a) + 1)^M},
\end{equation*}
and using these $Q^+$-values during action selection and bootstrapping.
In this section, we motivate \name{}, analyse it in the tabular setting, and describe a deep RL implementation.

\subsection{Motivations}

\textbf{Optimistic initialisation does not work with neural networks.}
For an optimistic initialisation to benefit exploration,  
the $Q$-values must start sufficiently high. 
More importantly, 
the values for unseen state-action pairs must \emph{remain high}, 
until they are updated. 
When using a deep neural network to approximate the $Q$-values, 
we can initialise the network to output optimistic values,
for example, by adjusting the final bias. 
However, after a small amount of training, 
the values for novel state-action pairs may not remain high. 
Furthermore, due to the generalisation of neural networks
we cannot know how the values for these unseen state-action pairs
compare to the trained state-action pairs. 
Figure \ref{fig:many_nn_1dim}  (left), which illustrates this effect
for a simple regression task, shows that different initialisations can lead 
to dramatically different generalisations. 
It is therefore prohibitively difficult 
to use optimistic initialisation of a deep neural network to drive exploration.

\begin{figure}
\centering
\includegraphics[height=115px]{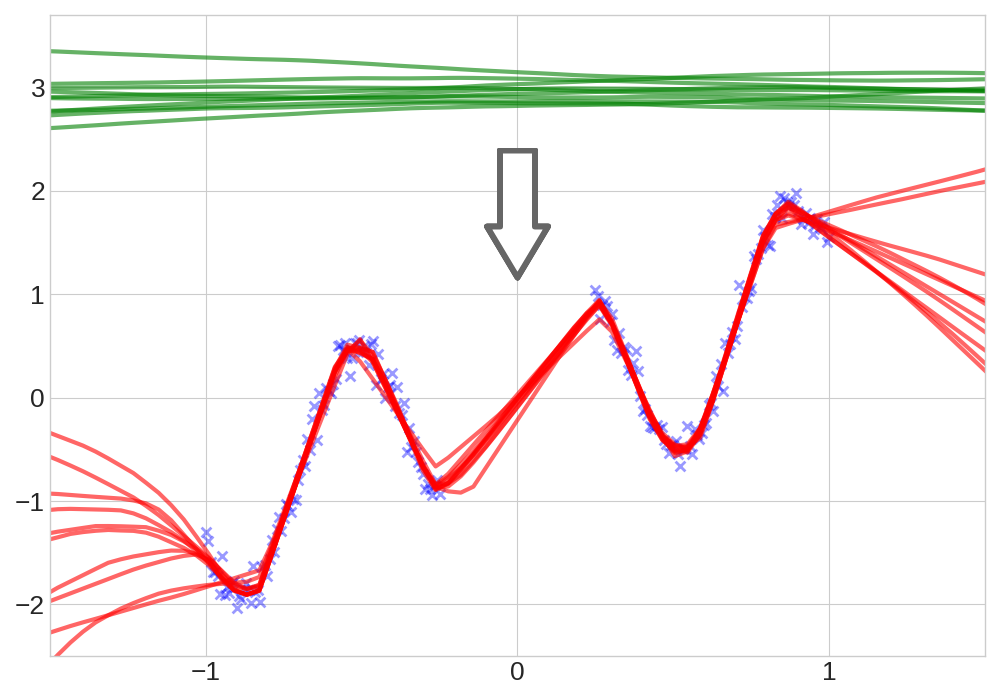}
\includegraphics[height=115px]{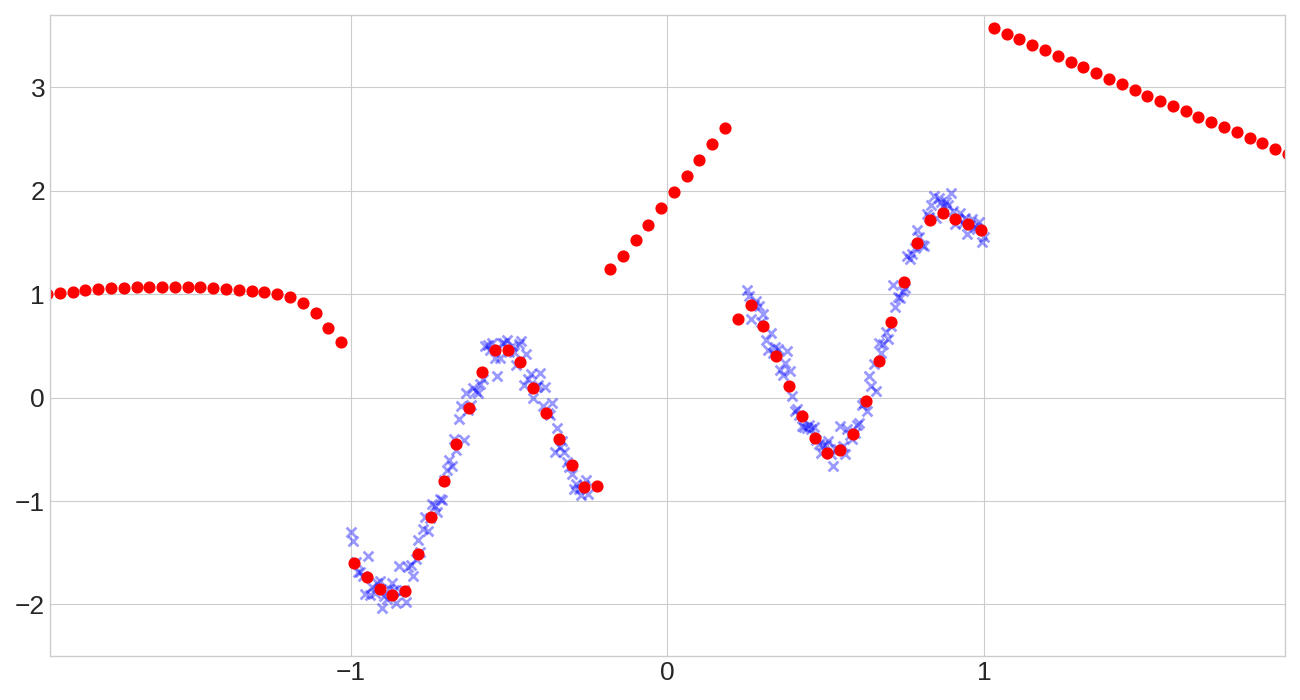}
\caption{A simple regression task to illustrate the effect of an optimistic initialisation in neural networks. \textbf{Left:} 10 different networks whose final layer biases are initialised at 3 (shown in green), and the same networks after training on the blue data points (shown in red). \textbf{Right:} One of the trained networks whose output has been augmented with an optimistic bias as in \eqref{eq:q_plus}. The counts were obtained by computing a histogram over the input space $[-2,2]$ with 50 bins.
}
\label{fig:many_nn_1dim}
\end{figure}

\textbf{Instead, we augment our $Q$-value estimates with an optimistic bonus.}
Our motivation for the form of the bonus in \eqref{eq:q_plus}, $\frac{C}{(N(s,a) + 1)^M}$, 
stems from UCB-H \citep{jin2018q}, 
where all tabular $Q$-values are initialised with $H$ 
and the first update for a state-action pair completely overwrites that value because the learning rate for the update ($\eta_1$) is $1$.
One can alternatively view these $Q$-values 
as zero-initialised with the additional term
$Q(s,a) + H \cdot \mathbbm{1}\{N(s,a) < 1\}$,
where $N(s,a)$ is the visitation count for the state-action pair $(s,a)$.
Our approach approximates the discrete indicator function $\mathbbm{1}$ 
as $(N(s,a) + 1)^{-M}$ for sufficiently large $M$. 
However, since gradient descent cannot completely 
overwrite the $Q$-value estimate for a state-action pair 
after a single update, 
it is beneficial to have a smaller hyperparameter $M$ 
that governs how quickly the optimism decays. 

\textbf{For a worst case analysis we assume all $Q$-value estimates are pessimistic.}
In the common scenario where all rewards are nonnegative, the lowest possible return for an episode is zero. 
If we then zero-initialise our $Q$-value estimates, 
as is common for neural networks,
we are starting with a \textit{pessimistic} initialisation. 
As shown in Figure \ref{fig:many_nn_1dim}(left), we cannot predict how a neural network will generalise, and thus we cannot predict if the $Q$-value estimates for unvisited state-action pairs will be optimistic or pessimistic.   
We thus assume they are pessimistic in order to perform a worst case analysis.
However, this is not a requirement: our method works with any initialisation and rewards.

In order to then approximate an optimistic initialisation, 
the scaling parameter $C$ in \eqref{eq:q_plus} can be chosen 
to guarantee unseen $Q^+$-values are overestimated,
for example, $C := H$ in the 
undiscounted finite-horizon tabular setting  
and $C := 1/(1-\gamma)$
in the discounted episodic setting (assuming
$1$ is the maximum reward 
obtainable at each timestep). 
However, in some environments it may be beneficial 
to use a smaller parameter $C$ for faster convergence. 
These $Q^+$-values are then used both during action selection and during bootstrapping.
Note that in the finite horizon setting the counts $N$, 
and thus $Q^+$, would depend on the timestep $t$. 

\textbf{Hence, we split the optimistic $Q^+$-values into two parts: a pessimistic $Q$-value component and an optimistic component based solely on the counts for a state-action pair.}
This separates our source of optimism 
from the neural network function approximator, yielding $Q^+$-values 
that remain high for unvisited state-action pairs, 
assuming a suitable counting scheme. 
Figure \ref{fig:many_nn_1dim} (right)
shows the effects of adding this optimistic component to a network's outputs. 

\textbf{Optimistic $Q^+$-values provide an increased incentive to explore.}
By using optimistic $Q^+$ estimates, especially during action selection and bootstrapping, the agent is incentivised to try and visit novel state-action pairs. 
Being optimistic during action selection in particular encourages the agent to try novel actions that have not yet been visitied. 
Without an optimistic estimate for novel state-action pairs the agent would have no incentive to try an action it has never taken before at a given state.
Being optimistic during bootstrapping ensures the agent is incentivised to return to states in which it has not yet tried every action. 
This is because the maximum $Q^+$-value will be large due to the optimism bonus.
Both of these effects lead to a strong incentive to explore novel state-action pairs. 

\subsection{Tabular Reinforcement Learning} \label{section:tabular}

\begin{algorithm}[t!]
\caption{\name{} algorithm}
\label{alg:q^+}

Initialise $Q_t(s,a) \leftarrow 0, N(s,a,t) \leftarrow 0, \forall (s,a,t) \in \mathcal{S} \times \mathcal{A} \times \{1,...,H, H+1\}$

\For{each episode $k=1,...,K$}{
\For{each timestep $t=1,...,H$}{
Take action $a_t \leftarrow \argmax_a Q_t^+(s_t, a)$.\\
Receive $r(s_t, a_t, t)$ and $s_{t+1}$.\\
Increment $N(s_t, a_t, t)$.\\
$Q_t(s_t, a_t) \leftarrow (1-\eta_N)Q_t(s_t, a_t) + \eta_N(r(s_t, a_t, t) + b_N^T + \min\{H, \max_{a'}Q_{t+1}^+(s_{t+1}, a')\})$.
}
}
\end{algorithm}

In order to ensure that \name{} has a strong theoretical foundation, we must ensure it is provably efficient in the tabular domain.
We restrict our analysis to the finite horizon tabular setting and only consider building upon UCB-H \citep{jin2018q}
for simplicity. Achieving a better regret bound using UCB-B \citep{jin2018q} and extending the analysis to the infinite horizon discounted setting \citep{dong2019q} are steps for future work.

Our algorithm removes the optimistic initialisation of UCB-H, instead using a pessimistic initialisation (all $Q$-values start at $0$). We then use our $Q^+$-values during action selection and bootstrapping. 
Pseudocode is presented in Algorithm \ref{alg:q^+}.

\begin{restatable}[]{theorem}{augmentedQPacmdp}
\label{th:augmentedq_pacmdp}
For any $p \in (0,1)$, with probability at least $1-p$ the total regret of $Q^+$ is at most $\mathcal{O}(\sqrt{H^4SAT\log(SAT/p)})$ for $M \geq 1$ and at most $\mathcal{O}(H^{1+M}SAT^{1-M} + \sqrt{H^4SAT\log(SAT/p)})$ for $0 < M < 1$.
\end{restatable}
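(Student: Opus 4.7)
The plan is to adapt the proof of the UCB-H regret bound from Jin et al.\ (2018) to the pessimistically initialised setting, where optimism is supplied by the augmentation $C/(N(s,a,t){+}1)^M$ rather than by the $Q$-initialisation at $H$. The argument has two pillars: (i) an optimism lemma $Q_t^+(s,a) \geq Q_t^{*}(s,a)$ for all $(s,a,t)$ with probability at least $1-p$, and (ii) a per-episode recursion on $\delta_t^k := (V_t^+ - V_t^{\pi_k})(s_t^k)$ whose cumulative sum bounds the regret, since by (i) the regret is upper bounded by $\sum_{k} \delta_1^k$.

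For optimism I would proceed by backward induction on $t$ from $H{+}1$ down to $1$. For any $(s,a)$ with $N(s,a,t)=0$ we have $Q_t^+(s,a)=C\ge H\ge Q_t^{*}(s,a)$, provided $C\ge H$. For $n:=N(s,a,t)\ge 1$, using the UCB-H schedule $\eta_n=(H{+}1)/(H{+}n)$ we have $\eta_1=1$, so the weight $\alpha_n^{0}:=\prod_{j=1}^n(1-\eta_j)=0$ and the unrolled update reads
\begin{equation*}
Q_t(s,a) \;=\; \sum_{i=1}^n \alpha_n^i\!\left(r_i + b_i^T + \min\bigl\{H,\,V_{t+1}^+(s_{t+1}^{k_i})\bigr\}\right).
\end{equation*}
Subtracting the Bellman identity for $Q_t^{*}$, applying the inductive hypothesis $V_{t+1}^+\ge V_{t+1}^{*}$ (with $V_{t+1}^{*}\le H$ so the clip does not affect the lower bound), and invoking Azuma--Hoeffding on the sampling martingale with the standard $b_i^T=c\sqrt{H^3\log(SAT/p)/i}$ yields $Q_t(s,a)\ge Q_t^{*}(s,a)$ on a good event. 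Adding the nonnegative augmentation $C/(n{+}1)^M$ only strengthens the inequality, and a union bound over $(s,a,t)$ and visit counts gives the claim with probability $1-p$.

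For the regret, greedy action selection under $Q^+$ gives $V_t^+(s_t^k)=Q_t^+(s_t^k,a_t^k)$, so expanding $Q_t^+=Q_t+C/(n_k{+}1)^M$ (with $n_k$ the pre-visit count) and applying the UCB-H recursion machinery (their Lemma 4.1 weight identities and the telescoping via $\sum_{k}\alpha_{n_k}^i\le 1+1/H$) produces
\begin{equation*}
\sum_{k=1}^K \delta_1^k \;\le\; \underbrace{\mathcal{O}\!\left(\sqrt{H^4 SAT\log(SAT/p)}\right)}_{\text{Azuma + }b^T\text{ sums, as in UCB-H}} \;+\; \underbrace{C\sum_{(s,a,t)}\sum_{n=1}^{N(s,a,t)} \tfrac{1}{n^M}}_{\text{new bonus contribution}}.
\end{equation*}
The case split comes entirely from the new bonus sum. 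For $M\ge 1$, $\sum_{n=1}^{N} n^{-M}=\mathcal{O}(\log T)$, so the bonus term is $\tilde{\mathcal{O}}(SAH)$ and is absorbed into the leading term. For $0<M<1$, $\sum_{n=1}^N n^{-M}\le N^{1-M}/(1{-}M)$, and concavity of $x\mapsto x^{1-M}$ under the constraint $\sum_{(s,a,t)}N(s,a,t)=T$ over $SAH$ terms gives $\sum_{(s,a,t)} N(s,a,t)^{1-M}\le (SAH)^M T^{1-M}\le SA\,H^M T^{1-M}$; multiplying by $C=H$ produces the extra $\mathcal{O}(H^{1+M}SA\,T^{1-M})$ term.

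The main obstacle I anticipate is the bookkeeping around what $Q^+$ denotes inside the bootstrap target. The update uses $\min\{H,V_{t+1}^+(s_{t+1})\}$, where $V_{t+1}^+$ itself depends on the ever-increasing counts, so in the recursion one must verify that the historical samples of $V_{t+1}^+$ used to form $Q_t$ at episode $k$ are compared to $V_{t+1}^{*}$ (or $V_{t+1}^{\pi_{k_i}}$) at the correct count index $k_i$, rather than at the episode-$k$ count. Once this is handled, the concentration and sum-of-weights arguments transport essentially verbatim from Jin et al., and the case distinction on $M$ reduces to the analytic calculation above on the bonus sum.
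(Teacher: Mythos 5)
Your proposal follows essentially the same route as the paper's proof: optimism is established by backward induction (the bonus $C/(N{+}1)^M$ with $C=H$ covering the unvisited case $N=0$ and the Azuma-controlled $b^T$ terms covering $N\ge 1$), and the regret recursion on $\delta_t^k$ is the UCB-H machinery with one additional bonus sum whose case analysis in $M$ yields exactly the stated bounds. The only cosmetic difference is that you bound $\sum_{(s,a,t)}N(s,a,t)^{1-M}$ by Jensen's inequality where the paper uses the cruder observation that each count value $n$ occurs at most $SA$ times per timestep; both give the same result.
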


The proof is based on that of Theorem 1 from \citep{jin2018q}. Our $Q^+$-values are always greater than or equal to the $Q$-values that UCB-H would estimate, thus ensuring that our estimates are also greater than or equal to $Q^*$. Our overestimation relative to UCB-H is then governed by the quantity $H/(N(s,a)+1)^M$, which when summed over all timesteps does not depend on $T$ for $M > 1$. 
As $M \to \infty$ we exactly recover UCB-H, and match the asymptotic performance of UCB-H for $M \geq 1$.
Smaller values of $M$ result in our optimism decaying more slowly, which results in more exploration.
The full proof is included in Appendix \ref{sec:tabular_thm1_proof}.

We also show that \name{} without optimistic action selection or the count-based intrinsic motivation term $b^T_N$ is not provably efficient by showing it can incur linear regret with high probability on simple MDPs (see Appendices \ref{sec:optimistic_action_example} and \ref{sec:need_im_tabular}).

Our primary motivation for considering a tabular algorithm that pessimistically initialises its $Q$-values, is to provide a firm theoretical foundation on which to base a deep RL algorithm, which we describe in the next section.

\subsection{Deep Reinforcement Learning} \label{section:deep_rl}

For deep RL, we base \name{} on DQN \citep{mnih2015human}, which uses a deep neural network with parameters $\theta$ as a function approximator $Q_{\theta}$. During action selection, we use our $Q^+$-values to determine the greedy action: 
\vspace{-2mm}
\begin{equation}
	a_t = \argmax_a \big\{ Q_{\theta}(s,a) 
	+ {\frac{C_{\text{action}}}{({N}(s,a) + 1)^M}} \big\},
\end{equation}
\vspace{-5mm}

where $C_{\text{action}}$ is a hyperparameter governing the scale of the optimistic bias during action selection.  
In practice, we use an $\epsilon$-greedy policy. After every timestep, we sample a batch of experiences from our experience replay buffer, and use $n$-step $Q$-learning \citep{mnih2016asynchronous}.
We recompute the counts for each relevant state-action pair, to avoid using stale pseudo-rewards.
The network is trained by gradient decent 
on the loss in \eqref{eq:dqn_loss} with the target: 

\begin{equation}
	y_t\!:=\!{\textstyle\sum\limits_{i=0}^{n-1}} \gamma^i\Big( r(s_{t+i},a_{t+i}) 
		+ {\textstyle\frac{\beta}{\sqrt{{N}(s_{t+i}, a_{t+i})}}}
	\Big)\! \,+\, \gamma^n \max_{a'}\! \Big\{\! Q_{\theta^-}(s_{t+n}, a') 
		+ {\frac{C_{\text{bootstrap}}}{({N}(s_{t+n},a') + 1)^M}}
	\!\Big\} . \!\!
\end{equation}
where $C_{\text{bootstrap}}$ is a hyperparameter that governs the scale of the optimistic bias during bootstrapping. 

For our final experiments on Montezuma's Revenge we additionally use the Mixed Monte Carlo (MMC) target \citep{bellemare2016unifying,Neural_Density_Models}, which mixes the target with the environmental monte carlo return for that episode. Further details are included in Appendix \ref{sec:mixed_monte_carlo}.

We use the method of static hashing \citep{tang2017exploration} to obtain our pseudocounts 
on the first 2 of 3 environments we test on. 
For our experiments on Montezuma's Revenge we count over a downsampled image of the current game frame. 
More details can be found in Appendix \ref{sec:bg_count}.

A DQN with pseudocount derived intrinsic reward (DQN + PC) \citep{bellemare2016unifying} can be seen as a naive extension of UCB-H to the deep RL setting.
However, it does not attempt to ensure optimism in the $Q$-values used during action selection and bootstrapping, which is a crucial component of UCB-H.
Furthermore, even if the $Q$-values were initialised optimistically at the start of training they would not remain optimistic long enough to drive exploration, due to the use of neural networks.
\name{}, on the other hand, is designed with these limitations of neural networks in mind. 
By augmenting the neural network's $Q$-value estimates with optimistic bonuses of the form $\frac{C}{(N(s,a) + 1)^M}$, \name{} ensures that the $Q^+$-values used during action selection and bootstrapping are optimistic.
We can thus consider \name{} as a \textit{deep} version of UCB-H.
Our results show that optimism during action selection and bootstrapping is extremely important for ensuring efficient exploration.

\section{Related Work} \label{sec:related_work}
\textbf{Tabular Domain:}
There is a wealth of literature related to provably efficient exploration in the tabular domain. Popular model-based algorithms such as R-MAX \citep{R_MAX}, MBIE (and MBIE-EB) \citep{Model_Based_Interval_Estimation}, UCRL2 \citep{UCRL2} and UCBVI \citep{azar2017minimax} are all based on the principle of \textit{optimism in the face of uncertainty}. 
\citet{osband2017posterior} adopt a Bayesian viewpoint and argue that posterior sampling (PSRL) \citep{strens2000bayesian} is more practically efficient than approaches that are optimistic in the face of uncertainty, and prove that in Bayesian expectation PSRL matches the performance of \textit{any} optimistic algorithm up to constant factors. \citet{agrawal2017optimistic} prove that an optimistic variant of PSRL is provably efficient under a frequentist regret bound. 

The only provably efficient model-free algorithms to date are delayed $Q$-learning \citep{Delayed_Q_Learning} and UCB-H (and UCB-B) \citep{jin2018q}. Delayed $Q$-learning optimistically initialises the $Q$-values that are carefully controlled when they are updated. UCB-H and UCB-B also optimistically initialise the $Q$-values, but also utilise a count-based intrinsic motivation term and a special learning rate to achieve a favourable regret bound compared to model-based algorithms. In contrast, \name{} pessimistically initialises the $Q$-values. Whilst we base our current analysis on UCB-H, the idea of augmenting pessimistically initialised $Q$-values can be applied to any model-free algorithm. 

\textbf{Deep RL Setting:}
A popular approach to improving exploration in  deep RL is to utilise intrinsic motivation \citep{oudeyer2009intrinsic}, which computes a quantity to add to the environmental reward.
Most relevant to our work is that of \cite{bellemare2016unifying}, which takes inspiration from MBIE-EB \citep{Model_Based_Interval_Estimation}. \citet{bellemare2016unifying} utilise the number of times a state has been visited to compute the intrinsic reward. They outline a framework for obtaining approximate counts, dubbed \textit{pseudocounts}, through a learned density model over the state space. \citet{Neural_Density_Models} extend the work to utilise a more expressive PixelCNN \citep{PixelCNN} as the density model, whereas \cite{ExplorationExemplarModels} train a neural network as a discriminator to also recover a density model. \citet{machado2018count} instead use the successor representation to obtain generalised counts. \citet{choi2018contingency} learn a feature space to count that focusses on regions of the state space the agent can control, and \citet{Curiosity_Driven_Exploration_Self_Supervised} learn a similar feature space in order to provide the error of a learned model as intrinsic reward. 
A simpler and more generic approach to approximate counting is \textit{static hashing} which projects the state into a lower dimensional space before counting \citep{tang2017exploration}. 
None of these approaches attempt to augment or modify the $Q$-values used for action-selection or bootstrapping, and hence do not attempt to ensure optimistic values for novel state-action pairs. 

\citet{UCB_InfoGain} build upon bootstrapped DQN \citep{Bootstrapped_DQN} to obtain uncertainty estimates over the $Q$-values for a given state in order to act optimistically by choosing the action with the largest UCB. However, they do not utilise optimistic estimates during bootstrapping. 
\citet{osband2018randomized} also extend bootstrapped DQN to include a prior by extending RLSVI \citep{osband2017deep} to deep RL. \citet{osband2017deep} show that RLSVI achieves provably efficient Bayesian expected regret, which requires a prior distribution over MDPs, whereas \name{} achieves provably efficient worse case regret. Bootstrapped DQN with a prior is thus a model-free algorithm that has strong theoretical support in the tabular setting. Empirically, however, its performance on sparse reward tasks is worse than DQN with pseudocounts. 

\citet{Domain_Indep_Optimistic_Init} shift and scale the rewards so that a zero-initialisation is optimistic. When applied to neural networks this approach does not result in optimistic $Q$-values due to the generalisation of the networks. \citet{bellemare2016unifying} empirically show that using a pseudocount intrinsic motivation term performs much better empirically on hard exploration tasks.

\citet{choshen2018dora} attempt to generalise the notion of a count to include information about the counts of future state-actions pairs in a trajectory, which they use to provide bonuses during action selection.
\citet{oh2018directed} extend delayed $Q$-learning to utilise these generalised counts and prove the scheme is PAC-MDP.
The generalised counts are obtained through $E$-values which are learnt using SARSA with a constant 0 reward and $E$-value estimates initialised at 1.
When scaling to the deep RL setting, these $E$-values are estimated using neural networks that cannot maintain their initialisation for unvisited state-action pairs, which is crucial for providing an incentive to explore. By contrast, \name{} uses a separate source to generate the optimism necessary to explore the environment.

\vspace{-2mm}
\section{Experimental Setup} 
\label{sec:app_exps}

We compare \name{} against baselines and ablations on three sparse reward environments.
The first is a randomized version of the Chain environment 
proposed by \citet{Bootstrapped_DQN} and used in \citep{shyam2018model} 
with a chain of length 100, which we call Randomised Chain. 
The second is a two-dimensional maze 
in which the agent starts in the top left corner (white dot) and 
is only rewarded upon finding the goal (light grey dot). 
We use an image of the maze as input and randomise the actions similarly to the chain.
The third is Montezuma's Revenge from Arcade Learning environment \citep{bellemare2013arcade}, a notoriously difficult sparse reward environment commonly used as a benchmark to evaluate the performance and scaling of Deep RL exploration algorithms.

See Appendix \ref{app:experiment_setup} for further details on the environments, baselines and hyperparameters used.\footnote{Code is available at: \href{https://github.com/oxwhirl/opiq}{https://github.com/oxwhirl/opiq}.} 

\vspace{-2mm}
\subsection{Ablations and Baselines}

We compare \name{} against a variety of DQN-based approaches that use pseudocount intrinsic rewards,
the DORA agent \citep{choshen2018dora} (which generates count-like optimism bonuses using a neural network), and two strong exploration baselines:

\textbf{$\epsilon$-greedy DQN:} a standard DQN that uses an $\epsilon$-greedy policy to encourage exploration. We anneal $\epsilon$ linearly over a fixed number of timesteps from $1$ to $0.01$.\\
\textbf{DQN + PC:} we add an intrinsic reward of 
$\beta/\sqrt{{N}(s,a)}$
to the environmental reward based on \citep{bellemare2016unifying,tang2017exploration}.\\
\textbf{DQN R-Subtract (+PC):} we subtract a constant from all environmental rewards received when training, so that a zero-initialisation is optimistic, as described for a DQN in \citep{bellemare2016unifying} and based on \citet{Domain_Indep_Optimistic_Init}.\\
\textbf{DQN Bias (+PC):} we initialise the bias of the final layer of the DQN to a positive value at the start of training as a simple method for optimistic initialisation with neural networks.\\
\textbf{DQN + DORA:} we use the \textit{generalised counts} from \citep{choshen2018dora} as an intrinsic reward.\\
\textbf{DQN + DORA OA:} we additionally use the generalised counts to provide an optimistic bonus during action selection.\\
\textbf{DQN + RND:} we add the RND bonus from \citep{burda2018exploration} as an intrinsic reward.\\
\textbf{BSP:} we use Bootstrapped DQN with randomised prior functions \citep{osband2018randomized}.

In order to better understand the importance of each component of our method, we also evaluate the following ablations:

\textbf{Optimistic Action Selection (\name{} w/o OB):} we only use our $Q^+$-values during action selection, and use $Q$ during bootstrapping (without Optimistic Bootstrapping). The intrinsic motivation term remains.\\
\textbf{Optimistic Action Selection and Bootstrapping (\name{} w/o PC):} we use our $Q^+$-values during action selection and bootstrapping, but do not include an intrinsic motivation term (without Pseudo Counts).

\vspace{-4mm}
\section{Results}
\vspace{-2mm}

\subsection{Randomised Chain}

\begin{figure}[b!]
\includegraphics[width=0.49\textwidth]{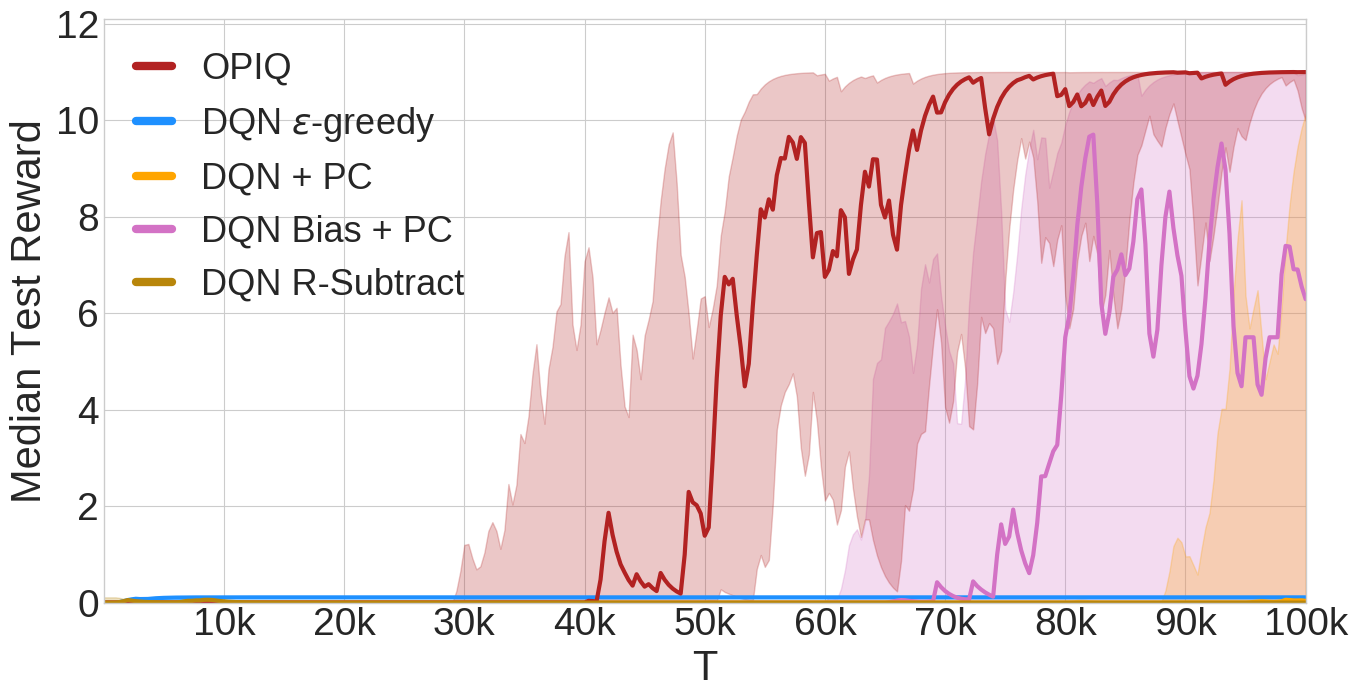}
\includegraphics[width=0.49\textwidth]{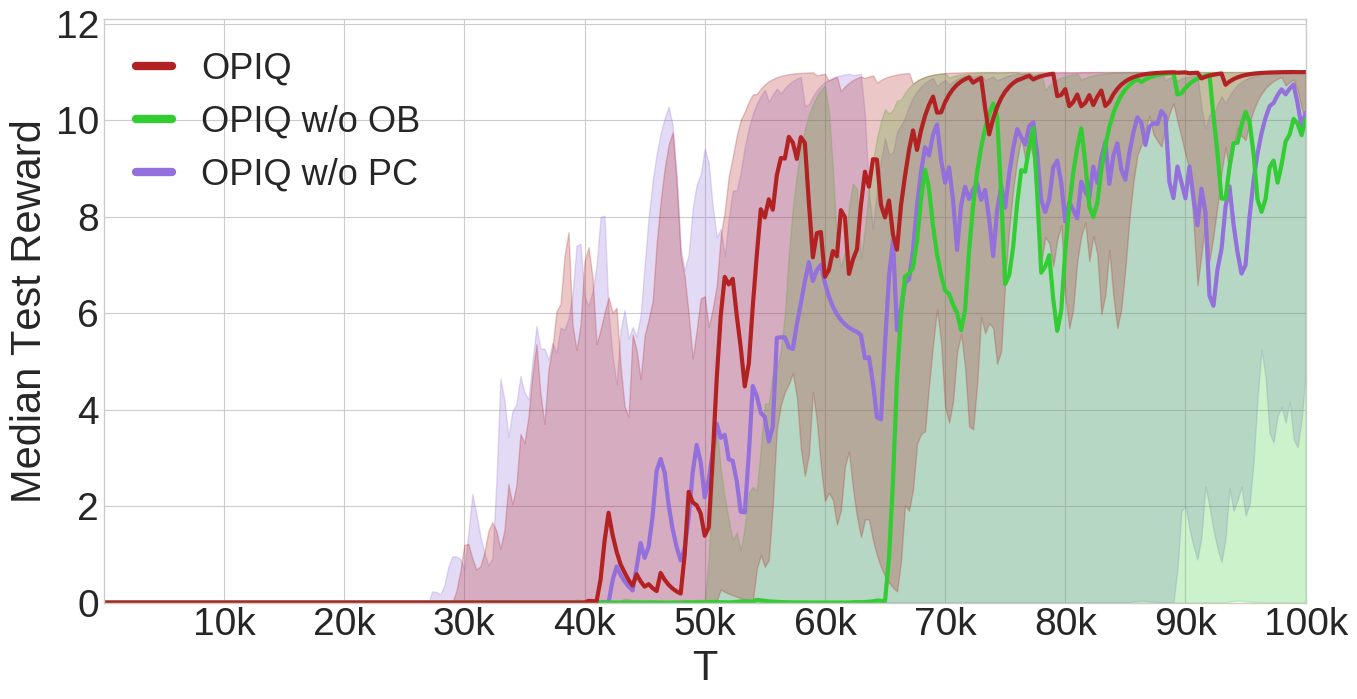}
\vspace{-3mm}
\caption{Results for the randomised chain environment. Median across 20 seeds is plotted and the 25\%-75\% quartile is shown shaded. \textbf{Left:} \name{} outperforms the baselines. \textbf{Right:} \name{} is more stable than its ablations. 
}
\label{fig:chain_graphs}
\end{figure}

We first consider the visually simple domain of the randomised chain and compare the count-based methods.
Figure \ref{fig:chain_graphs} shows the performance of \name{} compared to the baselines and ablations. \name{} significantly outperforms the baselines, which do not have any explicit mechanism for optimism during action selection.
A DQN with pseudocount 
derived intrinsic rewards is unable to reliably find the goal state, 
but setting the final layer's bias to one produces much better performance. 
For the DQN variant in which a constant is subtracted from all rewards, all of the configurations (including those with pseudocount derived intrinsic bonuses) were unable to find the goal on the right and thus 
the agents learn quickly to latch on the inferior reward of moving left. 

Compared to its ablations, \name{} is more stable in this task. \name{} without pseudocounts performs similarly to \name{} but is more varied across seeds, whereas the lack of optimistic bootstrapping results in worse performance and significantly more variance across seeds.

\subsection{Maze}

\begin{figure}[t!]
\centering
\includegraphics[width=0.40\textwidth]{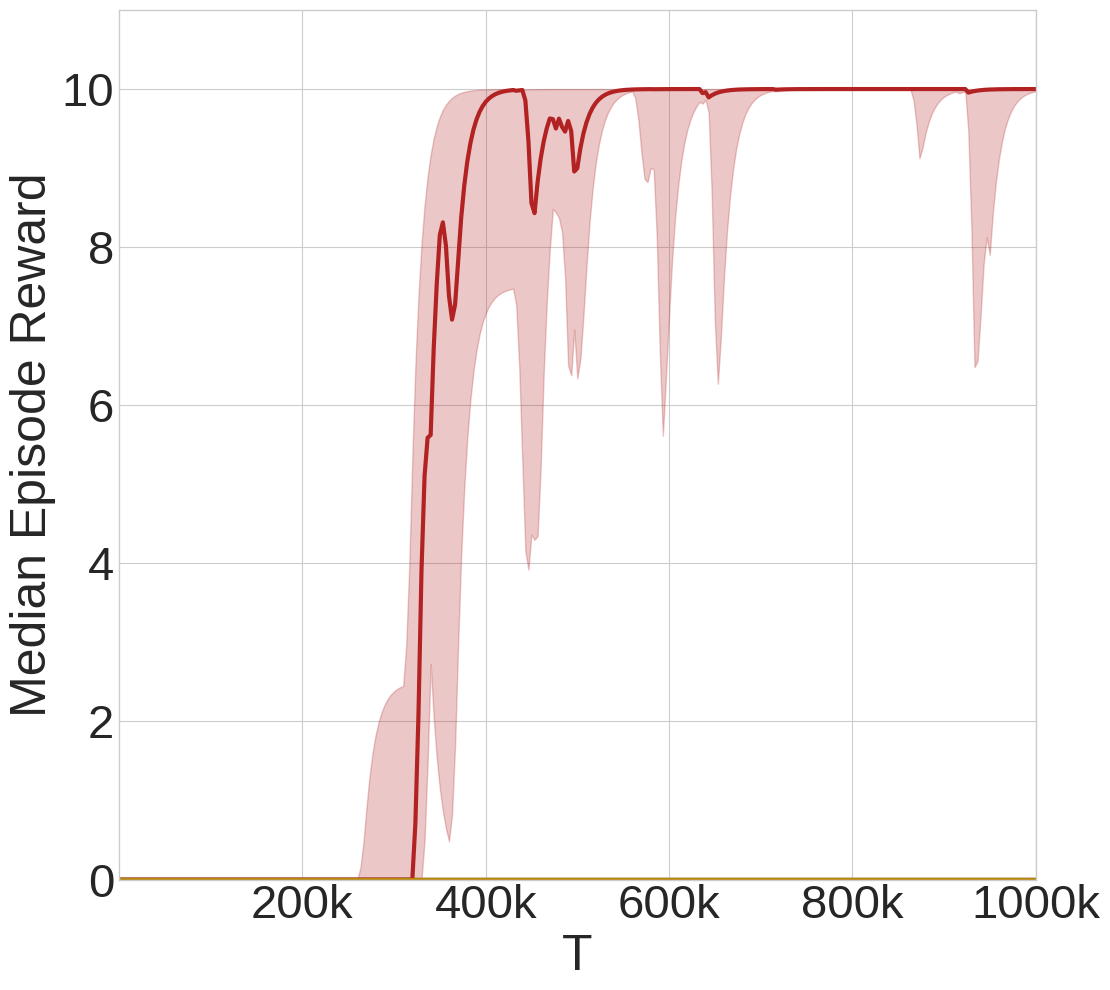}
\includegraphics[width=0.50\textwidth]{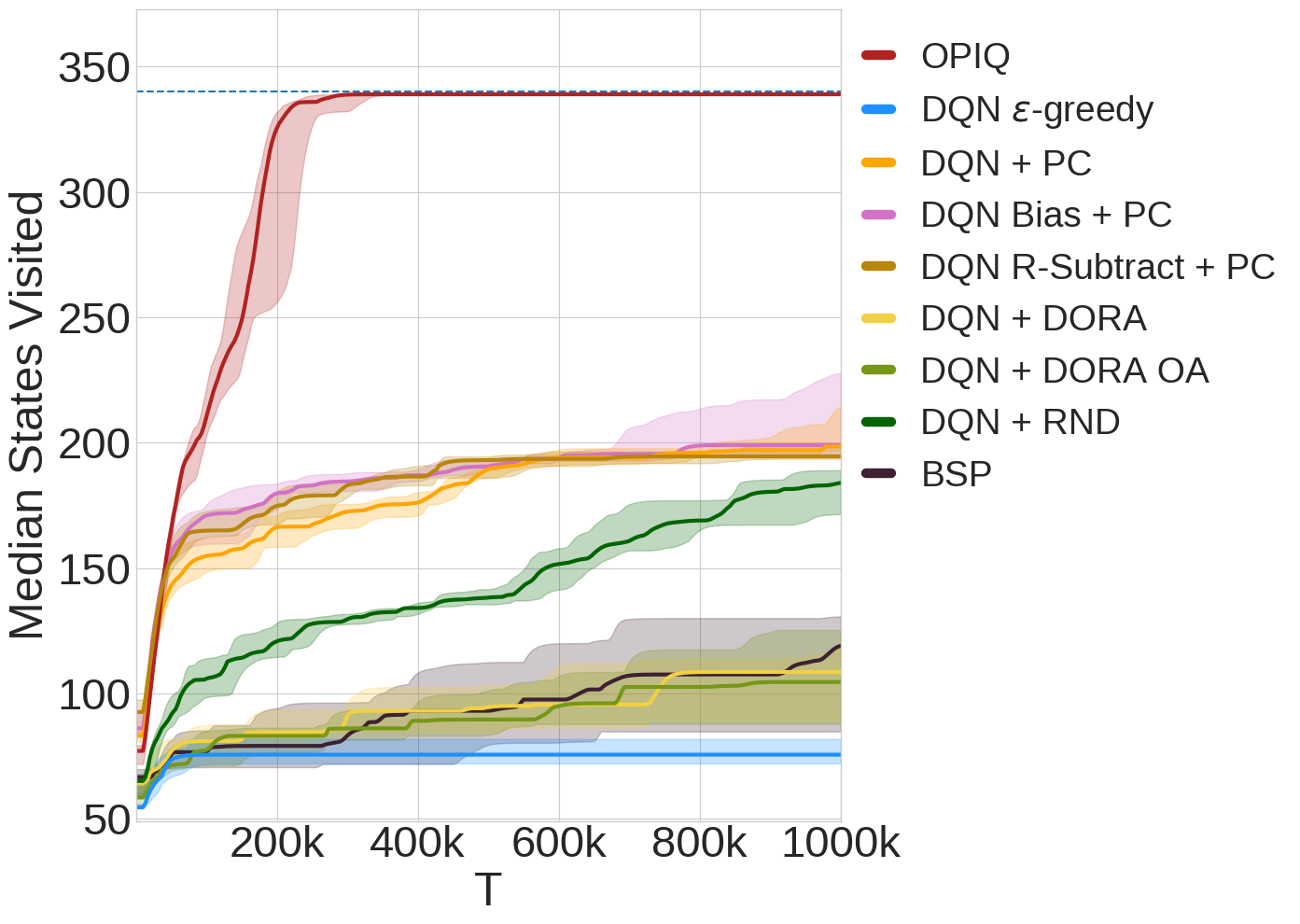}
\caption{Results for the maze environment comparing \name{} and baselines. Median across 8 seeds is plotted and the 25\%-75\% quartile is shown shaded. \textbf{Left:} The episode reward. 
\textbf{Right:} Number of distinct states visited over training. The total number of states in the environment is shown as a dotted line. 
}
\label{fig:maze_graphs}
\end{figure} 

\begin{figure}[t!]
\centering
\includegraphics[width=0.40\textwidth]{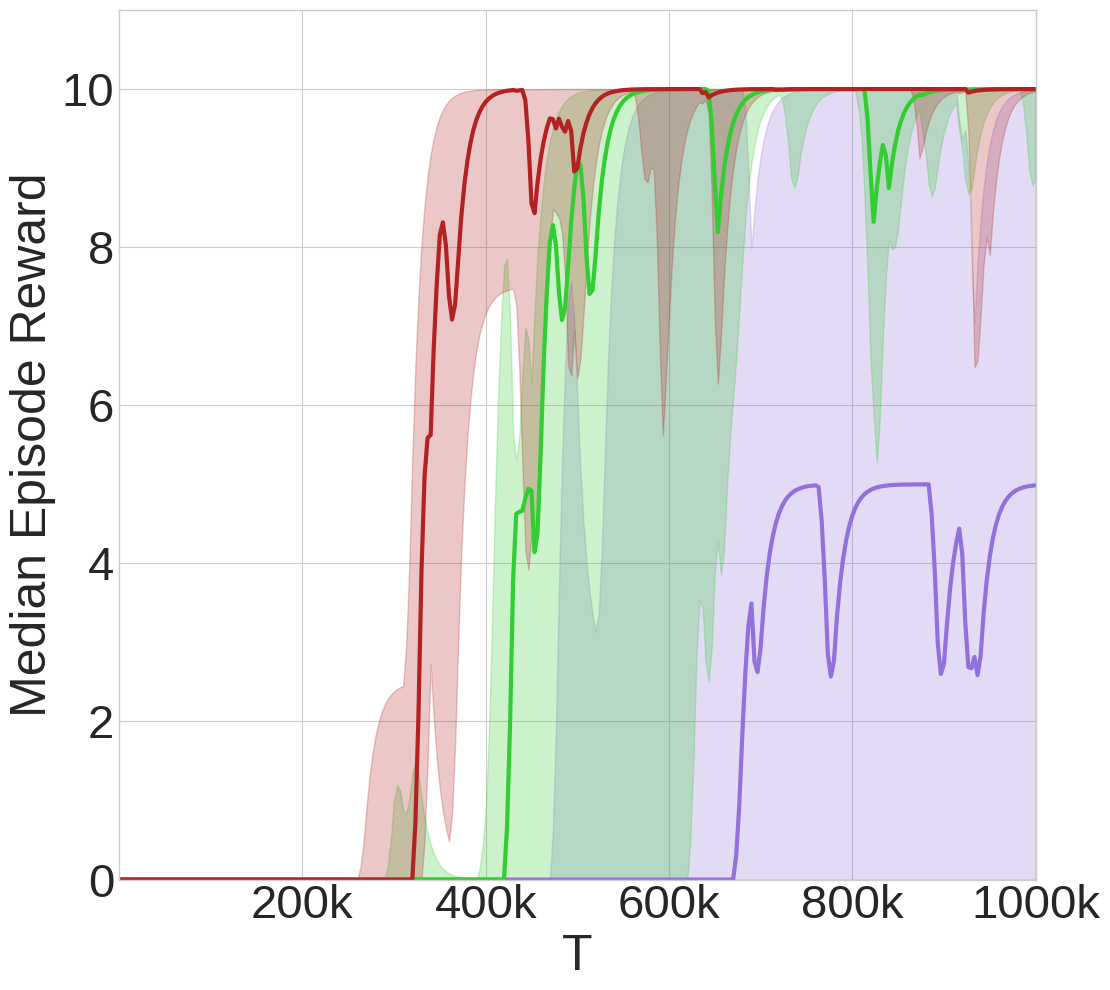}
\includegraphics[width=0.50\textwidth]{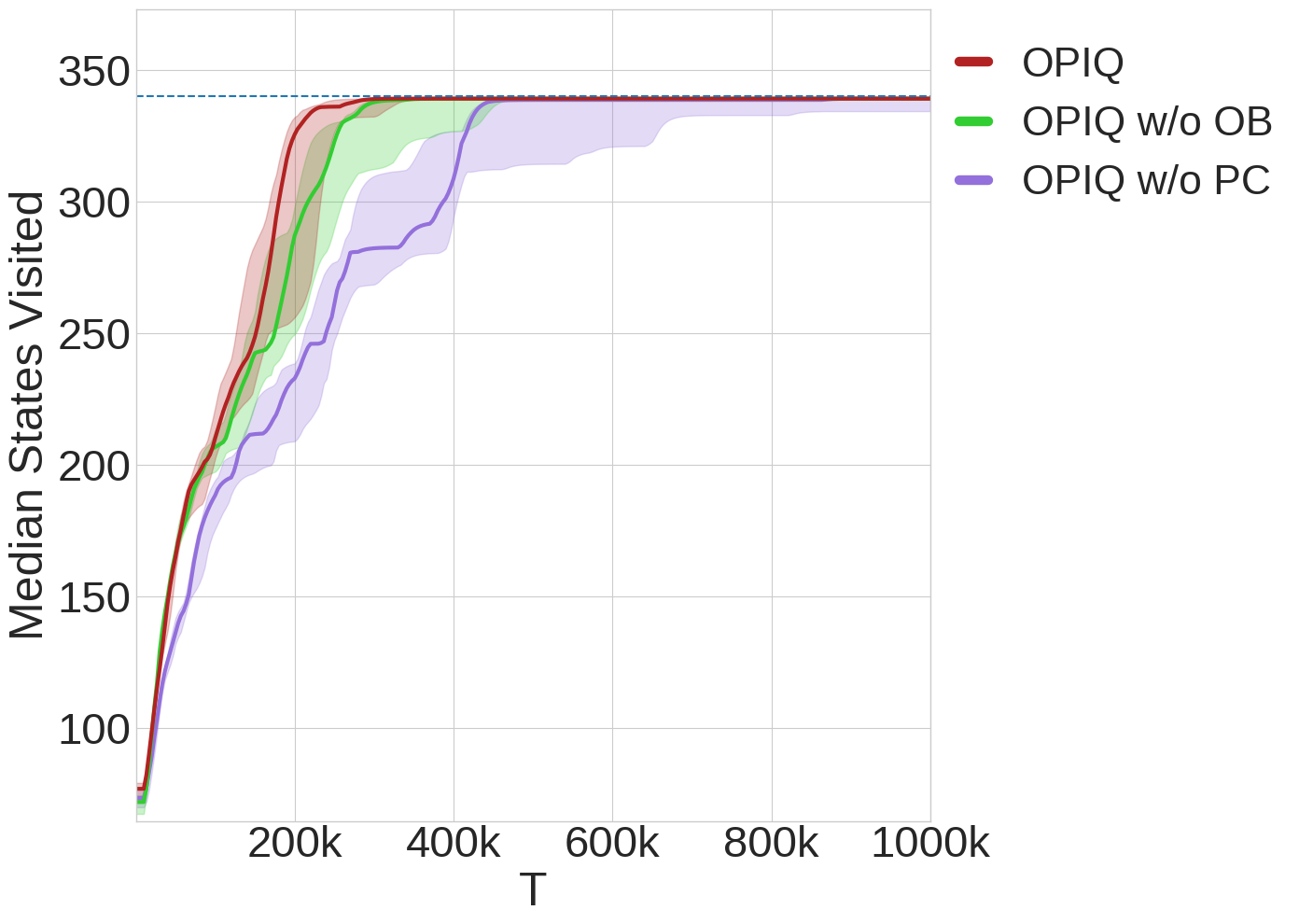}
\caption{Results for the maze environment comparing \name{} and ablations. Median across 8 seeds is plotted and the 25\%-75\% quartile is shown shaded. \textbf{Left:} The episode reward. 
\textbf{Right:} Number of distinct states visited over training. The total number of states in the environment is shown as a dotted line. 
}
\label{fig:maze_ablation_graphs}
\end{figure} 

We next consider the harder and more visually complex task of the Maze and compare against all baselines.

Figure \ref{fig:maze_graphs} shows that only \name{} is able to find the goal in the sparse reward maze. 
This indicates that explicitly ensuring optimism during action selection and bootstrapping  
can have a significant positive impact in sparse reward tasks, and that a naive extension of UCB-H to the deep RL setting (DQN + PC) results in insufficient exploration.

Figure \ref{fig:maze_graphs} (right) shows that attempting to ensure optimistic $Q$-values by adjusting the bias of the final layer (DQN Bias + PC), or by subtracting a constant from the reward (DQN R-Subtract + PC) has very little effect. 

As expected DQN + RND performs poorly on this domain compared to the pseudocount based methods. 
The visual input does not vary much across the state space, resulting in the RND bonus failing to provide enough intrinsic motivation to ensure efficient exploration.
Additionally it does not feature any explicit mechanism for optimism during action selection, and thus Figure \ref{fig:maze_graphs} (right) shows it explores the environment relatively slowly.

Both DQN+DORA and DQN+DORA OA also perform poorly in this domain since their source of intrinsic motivation disappears quickly. 
As noted in Figure \ref{fig:many_nn_1dim}, neural networks do not maintain their starting initialisations after training. 
Thus, the intrinsic reward DORA produces goes to 0 quickly since the network producing its bonuses learns to generalise quickly.

BSP is the only exploration baseline we test that does not add an intrinsic reward to the environmental reward, and thus it performs poorly compared to the other baselines on this environment.

Figure \ref{fig:maze_ablation_graphs} shows that \name{} and all its ablations manage to find the goal in the maze.
\name{} also explores slightly faster than its ablations (right), which shows the benefits of optimism during both action selection and bootstrapping. 
In addition, the episodic reward for the the ablation without
optimistic bootstrapping is noticeably more unstable (Figure \ref{fig:maze_ablation_graphs}, left).
Interestingly, \name{} without pseudocounts performs significantly worse than the other ablations. This is surprising since the theory suggests that the count-based intrinsic motivation is only required when the reward or transitions of the MDP are stochastic \citep{jin2018q}, which is not the case here. 
We hypothesise that adding PC-derived intrinsic bonuses to the reward provides an easier learning problem, especially when using $n$-step $Q$-Learning, which yields the performance gap.

However, our results show that the PC-derived intrinsic bonuses are not enough on their own to ensure sufficient exploration.
The large difference in performance between DQN + PC and \name{} w/o OB is important, since they only differ in the use of optimistic action selection.
The results in Figures \ref{fig:maze_graphs} and \ref{fig:maze_ablation_graphs} show that optimism during action selection is extremely important in exploring the environment efficiently.
Intuitively, this makes sense, since this provides an incentive for the agent to try actions it has never tried before, which is crucial in exploration.

Figure \ref{fig:q_value_visualisation_maze} visualises the values used during action selection for a DQN + PC agent and \name{}, 
showing
the count-based augmentation provides optimism for relatively novel state-action pairs, driving the agent to explore more of the state-action space. 
\begin{figure}[h!]
\includegraphics[width=0.49\textwidth]{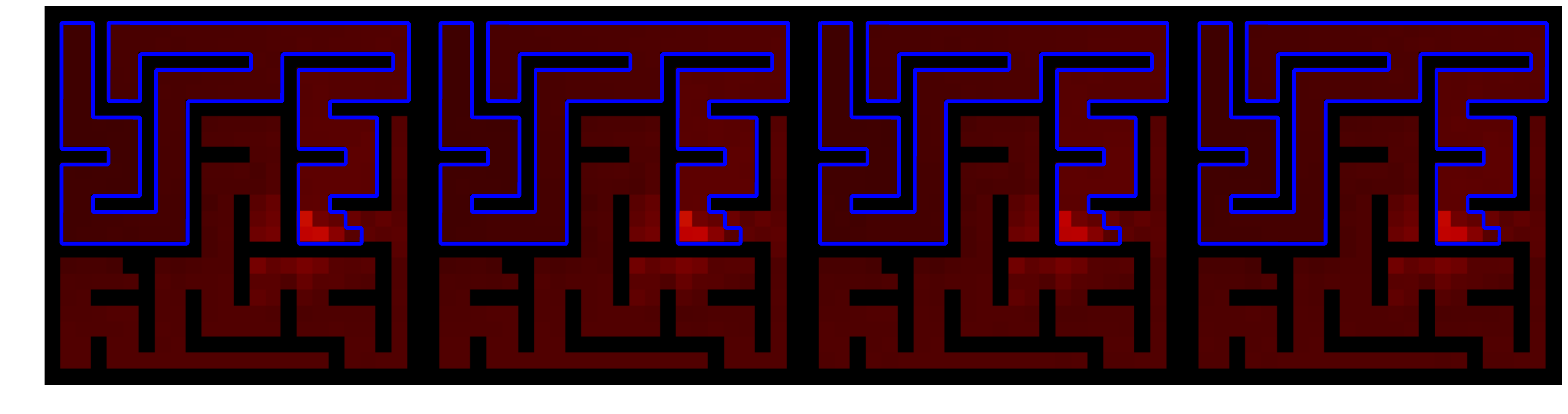}
\includegraphics[width=0.49\textwidth]{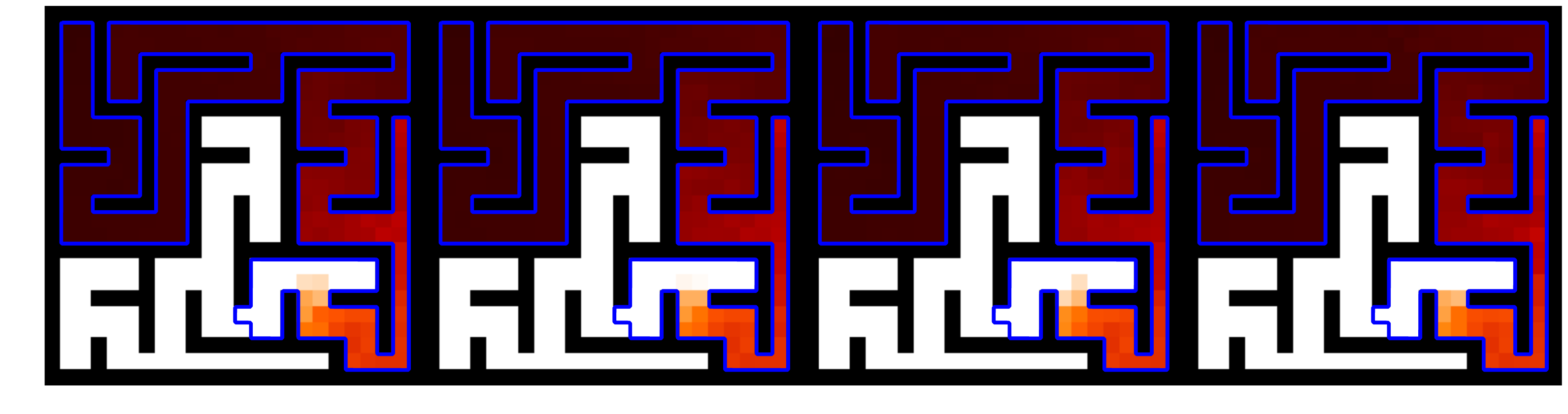}
\vspace{-1mm}
\caption{Values used during action selection for each of the $4$ actions. The region in blue indicates states that have already been visited. Other colours denote $Q$-values between $0$ (black)
and $10$ (white). \textbf{Left:} The $Q$-values used by DQN with pseudocounts. \textbf{Right:} $Q^+$-values used by \name{} with $C_{\text{action}}=100$.} 
\label{fig:q_value_visualisation_maze}
\end{figure}

\subsection{Montezuma's Revenge}

Finally, we consider Montezuma's Revenge, one of the hardest sparse reward games from the ALE \citep{bellemare2013arcade}. 
Note that we only train up to 12.5mil timesteps (50mil frames), a $1/4$ of the usual training time (50mil timesteps, 200mil frames).

\begin{figure}[h!]
\centering
\includegraphics[width=0.40\textwidth]{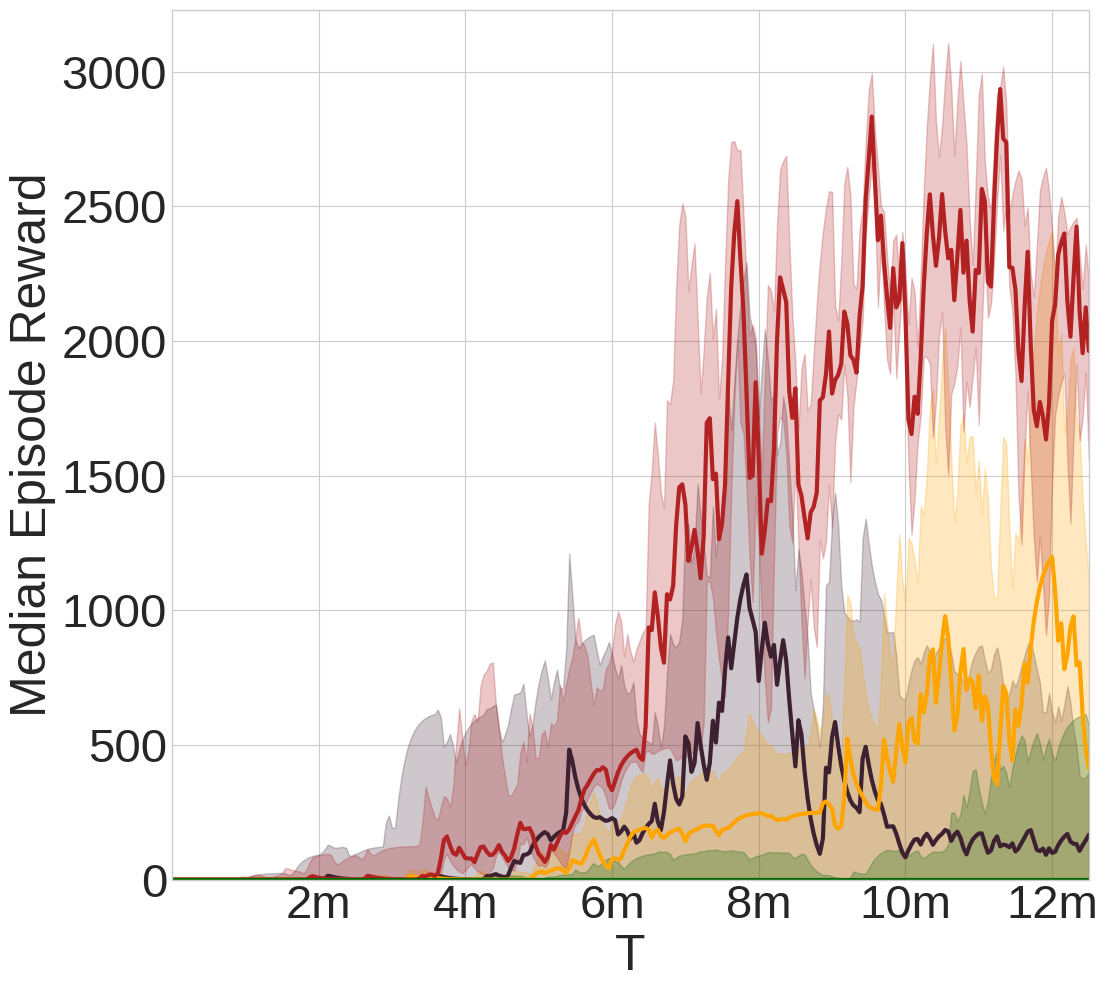}
\includegraphics[width=0.50\textwidth]{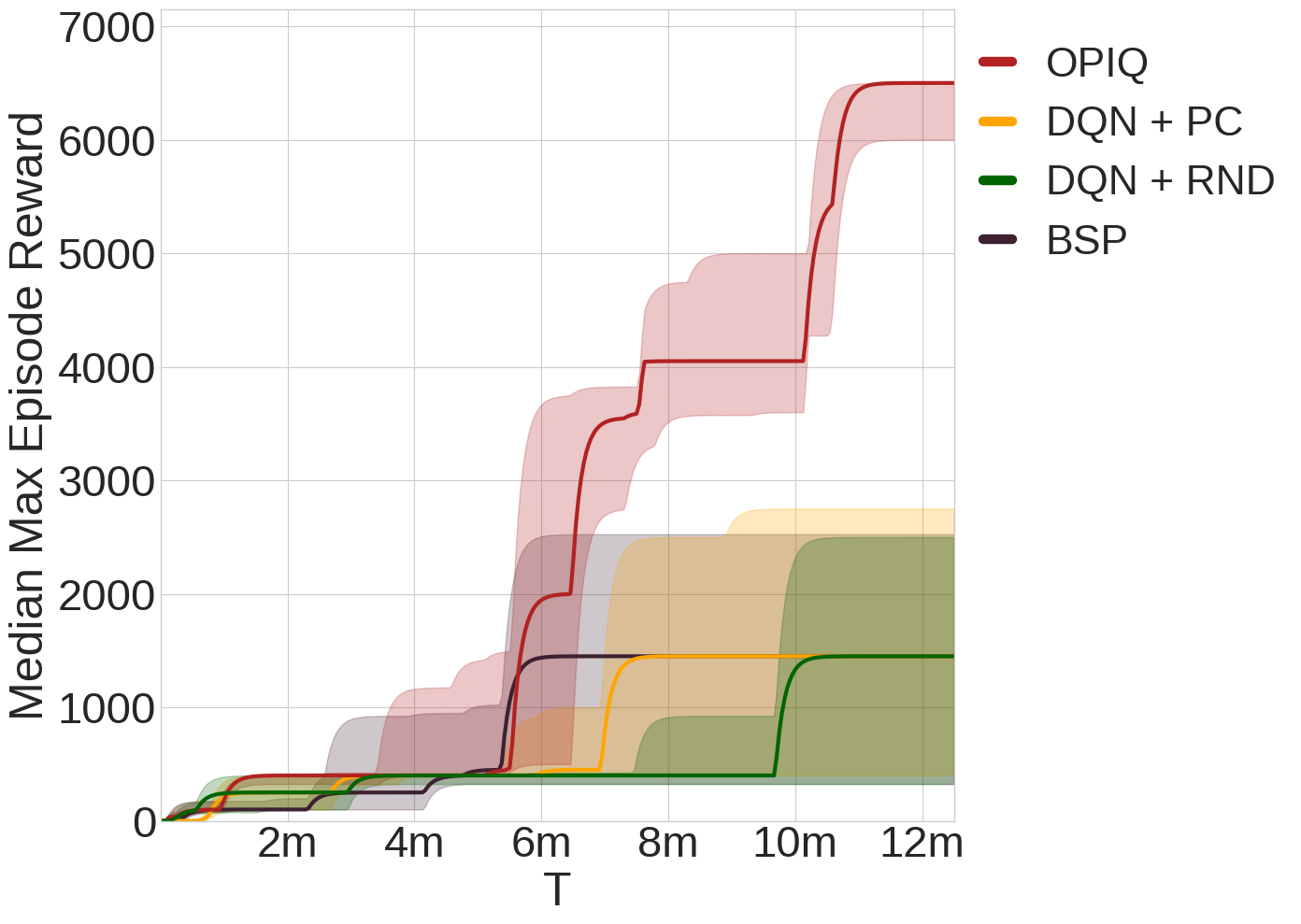}
\caption{Results for Montezuma's Revenge. Median across 4 seeds is plotted and the 25\%-75\% quartile is shown shaded. \textbf{Left:} The episode reward. \textbf{Right:} The maximum reward achieved during an episode.} 
\label{fig:montezuma_reward}
\end{figure}

\begin{figure}[h!]
\centering
\includegraphics[width=0.50\textwidth]{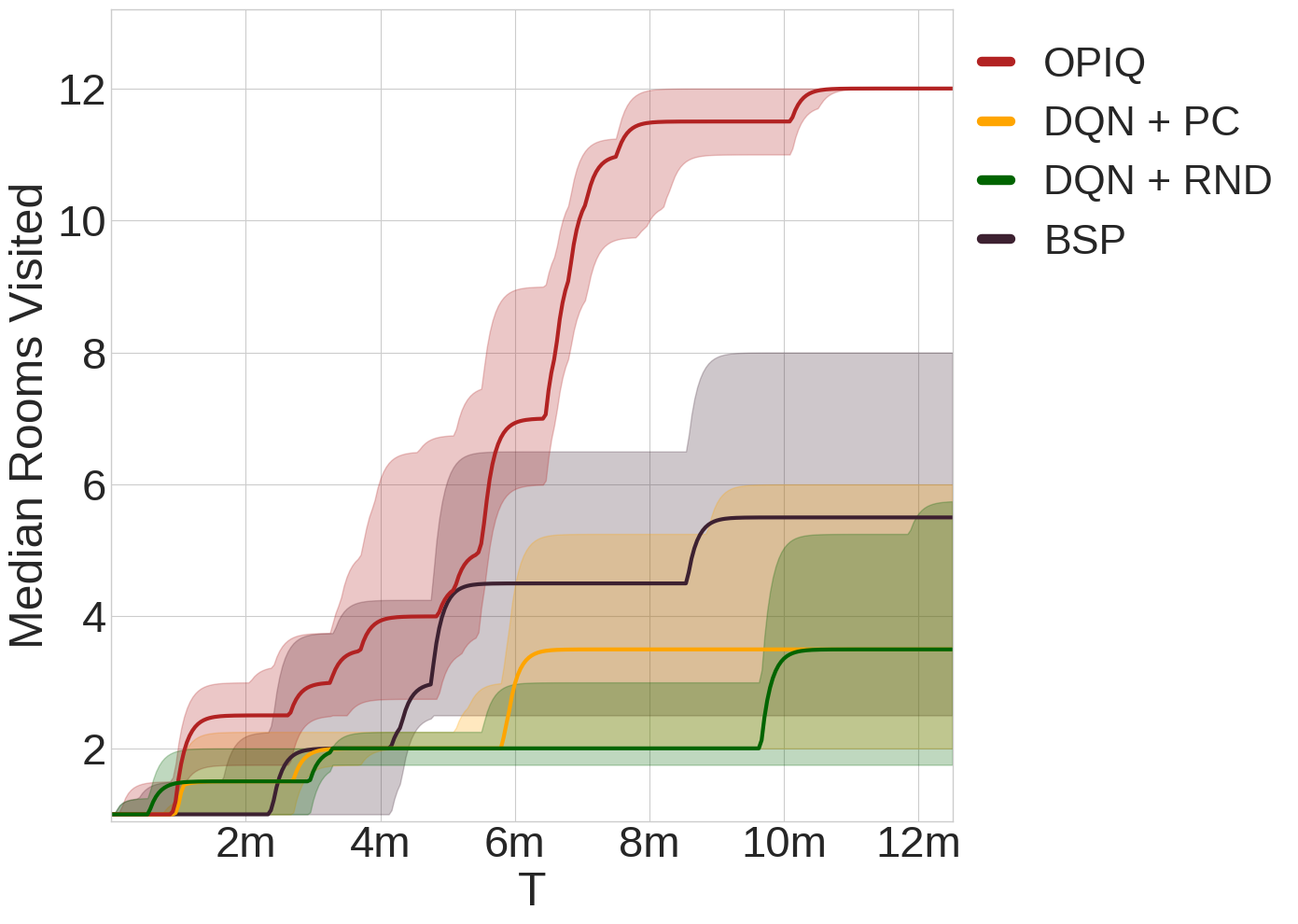}
\caption{Further Results for Montezuma's Revenge showing the number of rooms visited over training comparing \name{} and baselines. Median across 4 seeds is plotted and the 25\%-75\% quartile is shown shaded. 
} 
\label{fig:montezuma_states}
\end{figure}

Figure \ref{fig:montezuma_reward} shows that \name{} significantly outperforms the baselines in terms of the episodic reward and the maximum episodic reward achieved during training. 
The higher episode reward and much higher maximum episode reward of \name{} compared to DQN + PC once again demonstrates the importance of optimism during action selection and bootstrapping.

In this environment BSP performs much better than in the Maze, but achieves significantly lower episodic rewards than \name{}.

Figure \ref{fig:montezuma_states} shows the distinct number of rooms visited across the training period. We can see that \name{} manages to reliably explore 12 rooms during the 12.5mil timesteps, significantly more than the other methods, thus demonstrating its improved exploration in this complex environment. 

Our results on this challenging environment show that \name{} can scale to high dimensional complex environments and continue to provide significant performance improvements over an agent only using pseudocount based intrinsic rewards. 

\section{Conclusions and Future Work}

This paper presented \name{}, a model-free algorithm that does not rely on an optimistic initialisation to ensure efficient exploration. 
Instead, \name{} augments the $Q$-values estimates with a count-based optimism bonus. 
We showed that this is provably efficient in the tabular setting by modifying UCB-H to use a pessimistic initialisation and our augmented $Q^+$-values for action selection and bootstrapping. 
Since our method does not rely on a specific initialisation scheme, it easily scales to deep RL when paired with an appropriate counting scheme. 
Our results showed the benefits of maintaining optimism both during action selection and bootstrapping for exploration on a number of hard sparse reward environments including Montezuma's Revenge.
In future work, we aim to extend \name{} by integrating it with more expressive counting schemes.
\section{Acknowledgements}

We would like to thank the entire WhiRL lab for their helpful feedback, in particular Gregory Farquhar and Supratik Paul. 
We would also like to thank the anonymous reviewers for their constructive comments during the reviewing process.
This project has received funding from the European Research
Council (ERC), 
under the European Union’s Horizon 2020 research and innovation programme (grant agreement number 637713). It was also supported by an EPSRC grant (EP/M508111/1,
EP/N509711/1). The experiments were made possible by a
generous equipment grant from NVIDIA and the JP Morgan Chase Faculty Research Award.

\bibliography{references}

\begin{thebibliography}{39}
\providecommand{\natexlab}[1]{#1}
\providecommand{\url}[1]{\texttt{#1}}
\expandafter\ifx\csname urlstyle\endcsname\relax
  \providecommand{\doi}[1]{doi: #1}\else
  \providecommand{\doi}{doi: \begingroup \urlstyle{rm}\Url}\fi

\bibitem[Agrawal and Jia(2017)]{agrawal2017optimistic}
Shipra Agrawal and Randy Jia.
\newblock Optimistic posterior sampling for reinforcement learning: worst-case
  regret bounds.
\newblock In \emph{Advances in Neural Information Processing Systems}, pages
  1184--1194, 2017.

\bibitem[Azar et~al.(2017)Azar, Osband, and Munos]{azar2017minimax}
Mohammad~Gheshlaghi Azar, Ian Osband, and R{\'e}mi Munos.
\newblock Minimax regret bounds for reinforcement learning.
\newblock In \emph{Proceedings of the 34th International Conference on Machine
  Learning}, pages 263--272, 2017.

\bibitem[Bellemare et~al.(2016)Bellemare, Srinivasan, Ostrovski, Schaul,
  Saxton, and Munos]{bellemare2016unifying}
Marc Bellemare, Sriram Srinivasan, Georg Ostrovski, Tom Schaul, David Saxton,
  and Remi Munos.
\newblock Unifying count-based exploration and intrinsic motivation.
\newblock In \emph{Advances in Neural Information Processing Systems}, pages
  1471--1479, 2016.

\bibitem[Bellemare et~al.(2013)Bellemare, Naddaf, Veness, and
  Bowling]{bellemare2013arcade}
Marc~G Bellemare, Yavar Naddaf, Joel Veness, and Michael Bowling.
\newblock The arcade learning environment: An evaluation platform for general
  agents.
\newblock \emph{Journal of Artificial Intelligence Research}, 47:\penalty0
  253--279, 2013.

\bibitem[Brafman and Tennenholtz(2002)]{R_MAX}
Ronen~I Brafman and Moshe Tennenholtz.
\newblock {R-MAX} - a general polynomial time algorithm for near-optimal
  reinforcement learning.
\newblock \emph{Journal of Machine Learning Research}, 3\penalty0
  (Oct):\penalty0 213--231, 2002.

\bibitem[Burda et~al.(2018)Burda, Edwards, Storkey, and
  Klimov]{burda2018exploration}
Yuri Burda, Harrison Edwards, Amos Storkey, and Oleg Klimov.
\newblock Exploration by random network distillation.
\newblock \emph{arXiv preprint arXiv:1810.12894}, 2018.

\bibitem[Chen et~al.(2017)Chen, Schulman, Abbeel, and Sidor]{UCB_InfoGain}
Richard~Y Chen, John Schulman, Pieter Abbeel, and Szymon Sidor.
\newblock {UCB} and infogain exploration via $\boldsymbol{Q}$-ensembles.
\newblock \emph{arXiv preprint arXiv:1706.01502}, 2017.

\bibitem[Choi et~al.(2019)Choi, Guo, Moczulski, Oh, Wu, Norouzi, and
  Lee]{choi2018contingency}
Jongwook Choi, Yijie Guo, Marcin Moczulski, Junhyuk Oh, Neal Wu, Mohammad
  Norouzi, and Honglak Lee.
\newblock Contingency-aware exploration in reinforcement learning.
\newblock In \emph{International Conference on Learning Representations}, 2019.

\bibitem[Choshen et~al.(2018)Choshen, Fox, and Loewenstein]{choshen2018dora}
Leshem Choshen, Lior Fox, and Yonatan Loewenstein.
\newblock Dora the explorer: Directed outreaching reinforcement
  action-selection.
\newblock In \emph{International Conference on Learning Representations}, 2018.

\bibitem[Dong et~al.(2019)Dong, Wang, Chen, and Wang]{dong2019q}
Kefan Dong, Yuanhao Wang, Xiaoyu Chen, and Liwei Wang.
\newblock $\boldsymbol{Q}$-learning with {UCB} exploration is sample efficient
  for infinite-horizon {MDP}.
\newblock \emph{arXiv preprint arXiv:1901.09311}, 2019.

\bibitem[Ecoffet et~al.(2019)Ecoffet, Huizinga, Lehman, Stanley, and
  Clune]{ecoffet2019go}
Adrien Ecoffet, Joost Huizinga, Joel Lehman, Kenneth~O Stanley, and Jeff Clune.
\newblock Go-explore: a new approach for hard-exploration problems.
\newblock \emph{arXiv preprint arXiv:1901.10995}, 2019.

\bibitem[Fan et~al.(2000)Fan, Cao, Almeida, and Broder]{Counting_Bloom_Filter}
Li~Fan, Pei Cao, Jussara Almeida, and Andrei~Z Broder.
\newblock Summary cache: a scalable wide-area web cache sharing protocol.
\newblock \emph{IEEE/ACM Transactions on Networking (TON)}, 8\penalty0
  (3):\penalty0 281--293, 2000.

\bibitem[Fu et~al.(2017)Fu, Co-Reyes, and Levine]{ExplorationExemplarModels}
Justin Fu, John Co-Reyes, and Sergey Levine.
\newblock {EX2}: Exploration with exemplar models for deep reinforcement
  learning.
\newblock In \emph{Advances in Neural Information Processing Systems}, pages
  2577--2587, 2017.

\bibitem[Goel and Rodriguez(1987)]{goel1987note}
Sudhir~K Goel and Dennis~M Rodriguez.
\newblock A note on evaluating limits using riemann sums.
\newblock \emph{Mathematics Magazine}, 60\penalty0 (4):\penalty0 225--228,
  1987.

\bibitem[Hafner et~al.(2019)Hafner, Lillicrap, Fischer, Villegas, Ha, Lee, and
  Davidson]{hafner2018learning}
Danijar Hafner, Timothy Lillicrap, Ian Fischer, Ruben Villegas, David Ha,
  Honglak Lee, and James Davidson.
\newblock Learning latent dynamics for planning from pixels.
\newblock In \emph{Proceedings of the 36th International Conference on Machine
  learning}, 2019.

\bibitem[Hessel et~al.(2018)Hessel, Modayil, Van~Hasselt, Schaul, Ostrovski,
  Dabney, Horgan, Piot, Azar, and Silver]{hessel2018rainbow}
Matteo Hessel, Joseph Modayil, Hado Van~Hasselt, Tom Schaul, Georg Ostrovski,
  Will Dabney, Dan Horgan, Bilal Piot, Mohammad Azar, and David Silver.
\newblock Rainbow: Combining improvements in deep reinforcement learning.
\newblock In \emph{Thirty-Second AAAI Conference on Artificial Intelligence},
  2018.

\bibitem[Jaksch et~al.(2010)Jaksch, Ortner, and Auer]{UCRL2}
Thomas Jaksch, Ronald Ortner, and Peter Auer.
\newblock Near-optimal regret bounds for reinforcement learning.
\newblock \emph{Journal of Machine Learning Research}, 11\penalty0
  (Apr):\penalty0 1563--1600, 2010.

\bibitem[Jin et~al.(2018)Jin, Allen-Zhu, Bubeck, and Jordan]{jin2018q}
Chi Jin, Zeyuan Allen-Zhu, Sebastien Bubeck, and Michael~I Jordan.
\newblock Is $\boldsymbol{Q}$-learning provably efficient?
\newblock In \emph{Advances in Neural Information Processing Systems}, pages
  4863--4873, 2018.

\bibitem[Lin(1992)]{lin1992self}
Long-Ji Lin.
\newblock Self-improving reactive agents based on reinforcement learning,
  planning and teaching.
\newblock \emph{Machine learning}, 8\penalty0 (3-4):\penalty0 293--321, 1992.

\bibitem[Machado et~al.(2015)Machado, Srinivasan, and
  Bowling]{Domain_Indep_Optimistic_Init}
Marlos~C Machado, Sriram Srinivasan, and Michael~H Bowling.
\newblock Domain-independent optimistic initialization for reinforcement
  learning.
\newblock In \emph{AAAI Workshop: Learning for General Competency in Video
  Games}, 2015.

\bibitem[Machado et~al.(2018{\natexlab{a}})Machado, Bellemare, and
  Bowling]{machado2018count}
Marlos~C Machado, Marc~G Bellemare, and Michael Bowling.
\newblock Count-based exploration with the successor representation.
\newblock \emph{arXiv preprint arXiv:1807.11622}, 2018{\natexlab{a}}.

\bibitem[Machado et~al.(2018{\natexlab{b}})Machado, Bellemare, Talvitie,
  Veness, Hausknecht, and Bowling]{machado2018revisiting}
Marlos~C Machado, Marc~G Bellemare, Erik Talvitie, Joel Veness, Matthew
  Hausknecht, and Michael Bowling.
\newblock Revisiting the arcade learning environment: Evaluation protocols and
  open problems for general agents.
\newblock \emph{Journal of Artificial Intelligence Research}, 61:\penalty0
  523--562, 2018{\natexlab{b}}.

\bibitem[Mnih et~al.(2015)Mnih, Kavukcuoglu, Silver, Rusu, Veness, Bellemare,
  Graves, Riedmiller, Fidjeland, Ostrovski, et~al.]{mnih2015human}
Volodymyr Mnih, Koray Kavukcuoglu, David Silver, Andrei~A Rusu, Joel Veness,
  Marc~G Bellemare, Alex Graves, Martin Riedmiller, Andreas~K Fidjeland, Georg
  Ostrovski, et~al.
\newblock Human-level control through deep reinforcement learning.
\newblock \emph{Nature}, 518\penalty0 (7540):\penalty0 529, 2015.

\bibitem[Mnih et~al.(2016)Mnih, Badia, Mirza, Graves, Lillicrap, Harley,
  Silver, and Kavukcuoglu]{mnih2016asynchronous}
Volodymyr Mnih, Adria~Puigdomenech Badia, Mehdi Mirza, Alex Graves, Timothy
  Lillicrap, Tim Harley, David Silver, and Koray Kavukcuoglu.
\newblock Asynchronous methods for deep reinforcement learning.
\newblock In \emph{Proceedings of the 33rd International Conference on Machine
  Learning}, pages 1928--1937, 2016.

\bibitem[Oh and Iyengar(2018)]{oh2018directed}
Min-hwan Oh and Garud Iyengar.
\newblock Directed exploration in {PAC} model-free reinforcement learning.
\newblock \emph{arXiv preprint arXiv:1808.10552}, 2018.

\bibitem[Osband and Van~Roy(2017)]{osband2017posterior}
Ian Osband and Benjamin Van~Roy.
\newblock Why is posterior sampling better than optimism for reinforcement
  learning?
\newblock In \emph{Proceedings of the 34th International Conference on Machine
  Learning}, pages 2701--2710, 2017.

\bibitem[Osband et~al.(2016)Osband, Blundell, Pritzel, and
  Van~Roy]{Bootstrapped_DQN}
Ian Osband, Charles Blundell, Alexander Pritzel, and Benjamin Van~Roy.
\newblock Deep exploration via bootstrapped {DQN}.
\newblock In \emph{Advances in Neural Information Processing Systems}, pages
  4026--4034, 2016.

\bibitem[Osband et~al.(2017)Osband, Russo, Wen, and Van~Roy]{osband2017deep}
Ian Osband, Daniel Russo, Zheng Wen, and Benjamin Van~Roy.
\newblock Deep exploration via randomized value functions.
\newblock \emph{arXiv preprint arXiv:1703.07608}, 2017.

\bibitem[Osband et~al.(2018)Osband, Aslanides, and
  Cassirer]{osband2018randomized}
Ian Osband, John Aslanides, and Albin Cassirer.
\newblock Randomized prior functions for deep reinforcement learning.
\newblock In \emph{Advances in Neural Information Processing Systems}, pages
  8617--8629, 2018.

\bibitem[Ostrovski et~al.(2017)Ostrovski, Bellemare, Oord, and
  Munos]{Neural_Density_Models}
Georg Ostrovski, Marc~G Bellemare, Aaron van~den Oord, and R{\'e}mi Munos.
\newblock Count-based exploration with neural density models.
\newblock \emph{arXiv preprint arXiv:1703.01310}, 2017.

\bibitem[Oudeyer and Kaplan(2009)]{oudeyer2009intrinsic}
Pierre-Yves Oudeyer and Frederic Kaplan.
\newblock What is intrinsic motivation? a typology of computational approaches.
\newblock \emph{Frontiers in neurorobotics}, 1:\penalty0 6, 2009.

\bibitem[Pathak et~al.(2017)Pathak, Agrawal, Efros, and
  Darrell]{Curiosity_Driven_Exploration_Self_Supervised}
Deepak Pathak, Pulkit Agrawal, Alexei~A. Efros, and Trevor Darrell.
\newblock Curiosity-driven exploration by self-supervised prediction.
\newblock In \emph{Proceedings of the 34th International Conference on Machine
  Learning}, 2017.

\bibitem[Shyam et~al.(2019)Shyam, Ja{\'s}kowski, and Gomez]{shyam2018model}
Pranav Shyam, Wojciech Ja{\'s}kowski, and Faustino Gomez.
\newblock Model-based active exploration.
\newblock In \emph{Proceedings of the 36th International Conference on Machine
  Learning}, 2019.

\bibitem[Strehl and Littman(2008)]{Model_Based_Interval_Estimation}
Alexander~L Strehl and Michael~L Littman.
\newblock An analysis of model-based interval estimation for markov decision
  processes.
\newblock \emph{Journal of Computer and System Sciences}, 74\penalty0
  (8):\penalty0 1309--1331, 2008.

\bibitem[Strehl et~al.(2006)Strehl, Li, Wiewiora, Langford, and
  Littman]{Delayed_Q_Learning}
Alexander~L Strehl, Lihong Li, Eric Wiewiora, John Langford, and Michael~L
  Littman.
\newblock {PAC} model-free reinforcement learning.
\newblock In \emph{Proceedings of the 23rd International Conference on Machine
  learning}, pages 881--888. ACM, 2006.

\bibitem[Strens(2000)]{strens2000bayesian}
Malcolm Strens.
\newblock A bayesian framework for reinforcement learning.
\newblock In \emph{Proceedings of the 17th International Conference on Machine
  Learning}, pages 943--950, 2000.

\bibitem[Ta{\"\i}ga et~al.(2018)Ta{\"\i}ga, Courville, and
  Bellemare]{taiga2018approximate}
Adrien~Ali Ta{\"\i}ga, Aaron Courville, and Marc~G Bellemare.
\newblock Approximate exploration through state abstraction.
\newblock \emph{arXiv preprint arXiv:1808.09819}, 2018.

\bibitem[Tang et~al.(2017)Tang, Houthooft, Foote, Stooke, Chen, Duan, Schulman,
  DeTurck, and Abbeel]{tang2017exploration}
Haoran Tang, Rein Houthooft, Davis Foote, Adam Stooke, OpenAI~Xi Chen, Yan
  Duan, John Schulman, Filip DeTurck, and Pieter Abbeel.
\newblock \#{E}xploration: A study of count-based exploration for deep
  reinforcement learning.
\newblock In \emph{Advances in Neural Information Processing Systems}, pages
  2753--2762, 2017.

\bibitem[van~den Oord et~al.(2016)van~den Oord, Kalchbrenner, Espeholt,
  Vinyals, Graves, et~al.]{PixelCNN}
Aaron van~den Oord, Nal Kalchbrenner, Lasse Espeholt, Oriol Vinyals, Alex
  Graves, et~al.
\newblock Conditional image generation with pixelcnn decoders.
\newblock In \emph{Advances in Neural Information Processing Systems}, pages
  4790--4798, 2016.

\end{thebibliography}
\bibliographystyle{plainnat}

\appendix
\newpage
\section{Background}
\subsection{Tabular Reinforcement Learning}
\label{sec:bg_tab}

For the tabular setting, we consider a discrete finite-horizon Markov Decision Process (MDP), which can be defined as a tuple ($\mathcal{S}, \mathcal{A}, \{P_t\}, \{R_t\}, H, \rho$), where $\mathcal{S}$ is the finite state space, $\mathcal{A}$ is the finite action space, $P_t(\cdot|s,a)$ is the state-transition distribution for timestep $t = 1,...,H$, $R_t(\cdot|s,a)$ is the distribution over rewards after taking action $a$ in state $s$, $H$ is the horizon, and $\rho$ is the distribution over starting states. Without loss of generality we assume that $R_t(\cdot|s,a) \in [0,1]$. We use $S$ and $A$ to denote the number of states and the number of actions, respectively, and $N(s,a,t)$ as the number of times a  state-action pair $(s,a)$ has been visited at timestep $t$.

Our goal is to find a set of policies $\pi_t : \mathcal{S} \to \mathcal{A}$,
$\pi := \{\pi_t\}$,
that chooses the agent's actions at time $t$ such that 
the expected sum of future rewards is maximised.
To this end we define the $Q$-value at time $t$
of a given policy $\pi$ as
$Q^{\pi}_t(s,a) := \E\left[r + Q^{\pi}_{t+1}(s', \pi_{t+1}(s'))
\,|\, {\scriptstyle r \sim R_t(\cdot|s,a), \, s' \sim P_t(\cdot|s,a)} \right]$,
where $Q^{\pi}_t(s,a) = 0, \forall t > H$.
The agent interacts with the environment for $K$ episodes,
$T := KH$,
yielding a total regret:
$\text{Regret}(K) = \sum_{k=1}^K
\big(\max_{\pi^*}Q^{\pi^*}_1(s^k_1, \pi^*_1(s^k_1)) 
- Q^{\pi^k}_1(s^k_1, \pi^k_1(s^k_1))\big).$
Here $s_1^k$ refers to the starting state and $\pi^k$ 
to the policy at the beginning of episode $k$.
We are interested in bounding the worst case total regret 
with probability $1-p$, $0 < p < 1$.

UCB-H \citep{jin2018q} is an online $Q$-learning algorithm for the finite-horizon setting outlined above 
where the worse case total regret is bounded with a probability of $1-p$ by
$\mathcal{O}(\sqrt{H^4SAT\log(SAT/p)}$. 
All $Q$-values for timesteps $t \leq H$ are optimistically initialised at $H$. 
The learning rate is defined as $\eta_N = \frac{H+1}{H+N}$, 
where $N := N(s_t,a_t,t)$ is the number of times state-action pair $(s_t,a_t)$
has been observed at step $t$ and $\eta_1 = 1$ at the first encounter of any state-action pair. 
The update rule for a transition at step $t$ from state $s_t$ to $s_{t+1}$, 
after executing action $a_t$ and receiving reward $r_t$, is:
\vspace{-2mm}
\begin{equation}
    Q_t(s_t, a_t) \leftarrow (1-\eta_N) \, Q_t(s_t, a_t) 
    + \eta_N\big(r_t + b_N^T 
    + \min\{H, \max_{a'}Q_{t+1}(s_{t+1}, a')\} \big),
\end{equation}
\vspace{-6mm}

where $b^T_N := 2\sqrt{\frac{H^3 \text{log}(SAT/p)}{N}}$, is the count-based intrinsic motivation term.\footnote{
    \citet{jin2018q} use 
    $\exists c>0$ s.t.\ $b^T_N := c\sqrt{\frac{H^3 \text{log}(SAT/p)}{N}}$
    in their proof, but we find that $c=2$ suffices.
}

\section{Counting in Large, Complex State Spaces}
\label{sec:bg_count}

In deep RL, 
the primary difficulty for exploration based on count-based intrinsic rewards  
is obtaining appropriate state-action counts. 
In this paper we utilise approximate counting schemes 
\citep{bellemare2016unifying,Neural_Density_Models,tang2017exploration} 
in order to cope with continuous and/or high-dimensional state spaces.
In particular, for the chain and maze environments
we use \textit{static hashing} \citep{tang2017exploration}, 
which projects a state $s$ to a low-dimensional 
feature vector $\phi(s) = \text{sign}(A f(s)),$
where $f$ flattens the state $s$ 
into a single dimension of length $D$; 
$A$ is a $k \times D$ matrix whose entries are initialised 
i.i.d.\ from a unit Gaussian: $\mathcal{N}(0,1)$; 
and $k$ is a hyperparameter controling 
the \textit{granularity} of counting: 
higher $k$ leads to more distinguishable states 
at the expense of generalisation. 

Given the vector $\phi(s)$, 
we use a counting bloom filter 
\citep{Counting_Bloom_Filter} to update and retrieve its counts efficiently.
To obtain counts $N(s,a)$ for state-action pairs, 
we maintain a separate data structure of counts 
for each action (the same vector $\phi(s)$ is used for all actions).
This counting scheme is tabular 
and hence the counts for sufficiently different states 
do not interfere with one another. 
This ensures $Q^+$-values for unseen state-action pairs 
in \eqref{eq:q_plus} are large.

For our experiments on Montezuma's Revenge we use the same method of downsampling as in \citep{ecoffet2019go}, in which the greyscale state representation is resized from (42x42) to (11x8) and then binned from $\{0,...,255\}$ into 8 categories.
We then maintain tabular counts over the new representation.

\section{Granularity of the counting model}

The granularity of the counting scheme is an important modelling consideration. 
If it is too granular, then it will assign an optimistic bias in regions of the state space where the network should be trusted to generalise. 
On the other hand, if it is too coarse then it could fail to provide enough of an optimistic bias in parts of the state space where exploration is still required. 
Figures \ref{fig:nn_1dim_bins}  shows the differences between 2 levels of granularity. \citet{taiga2018approximate} provide a much more detailed analysis on the granularity of the count-based model, and its implications on the learned $Q$-values.

\begin{figure}[h] 
\centering
\includegraphics[width=0.48\textwidth]{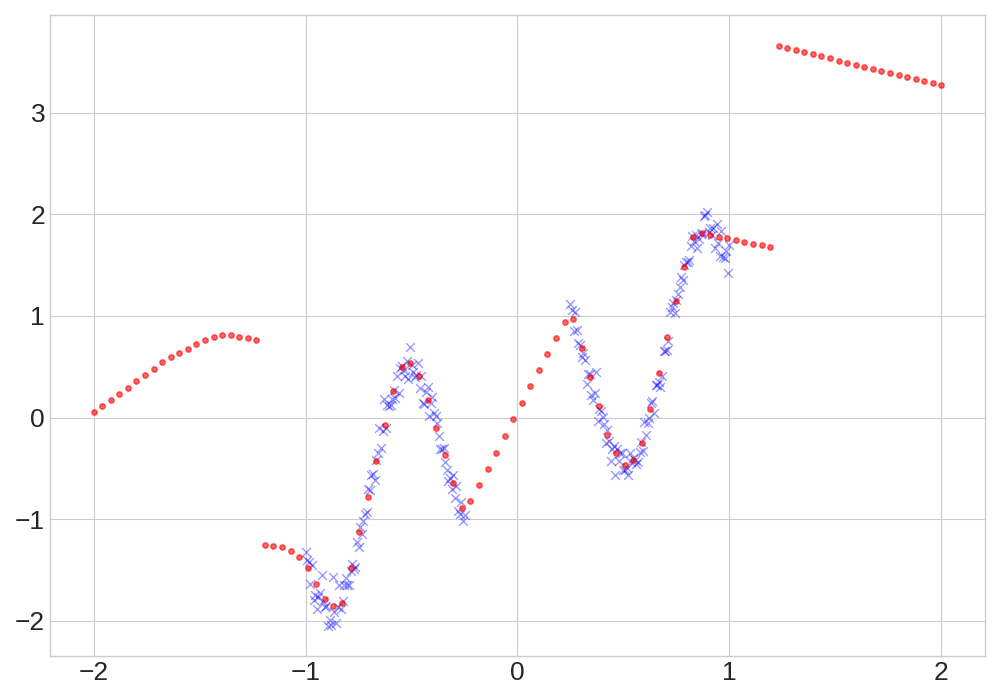}
\includegraphics[width=0.48\textwidth]{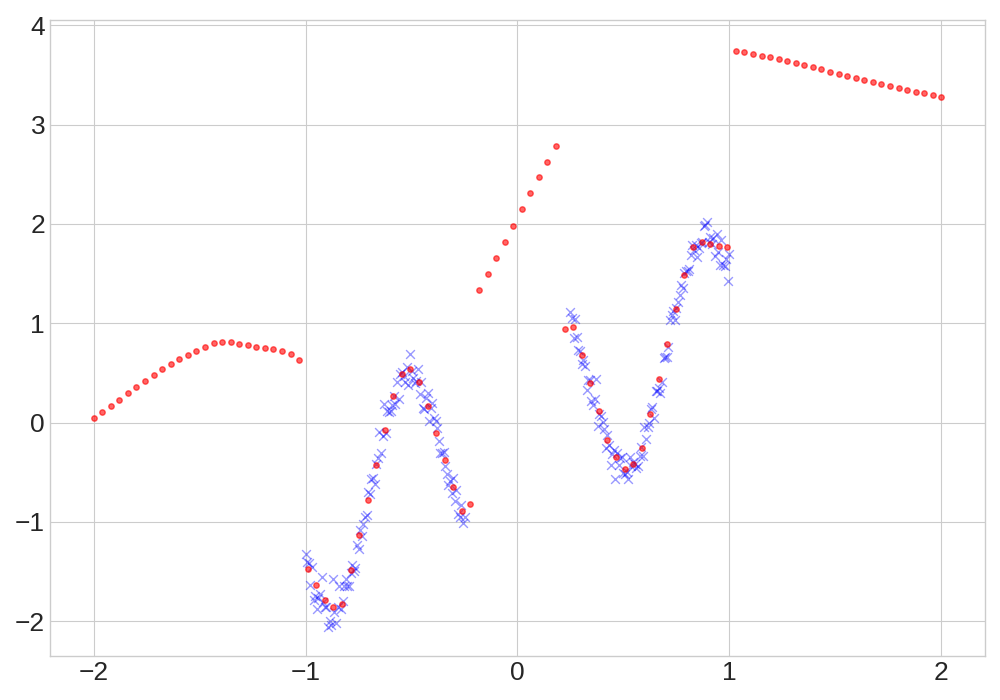}
\caption{We consider the counting scheme from Figure \ref{fig:many_nn_1dim}, but vary the number of bins used. \textbf{Left:} 6 bins are used. Only the data points far from the training data are given an optimistic bias. \textbf{Right:} 50 bins are used. An optimistic bias is given to all data points that are not very close to the training data.}
\label{fig:nn_1dim_bins}
\end{figure}

\section{Experimental Setup} 
\label{app:experiment_setup}

\subsection{Environments}
\subsubsection{Randomised Chain}

A randomized version of the Chain environment proposed by \citet{Bootstrapped_DQN} and used in \citep{shyam2018model}. We use a Chain of length 100.
The agent starts the episode in State 2, and interacts with the MDP for 109 steps, after which the agent is reset. The agent has 2 actions that can move it Left or Right. At the beginning of training the action which takes the agent left or right at each state is randomly picked and then fixed. The agent receives a reward of $0.001$ for going Left in State 1, a reward of $1$ for going Right in State $100$ and no reward otherwise. The optimal policy is thus to pick the action that takes it Right at each timestep. Figure \ref{fig:chain} shows the structure of the 100 Chain.
Similarly to \citet{Bootstrapped_DQN} we use a thermometer encoding for the state: $\phi(s) := (\mathbbm{1}\{x \leq s\}) \in \{0,1\}^100$.

\begin{figure}[h]
    \centering
    \includegraphics[width=0.34\textwidth]{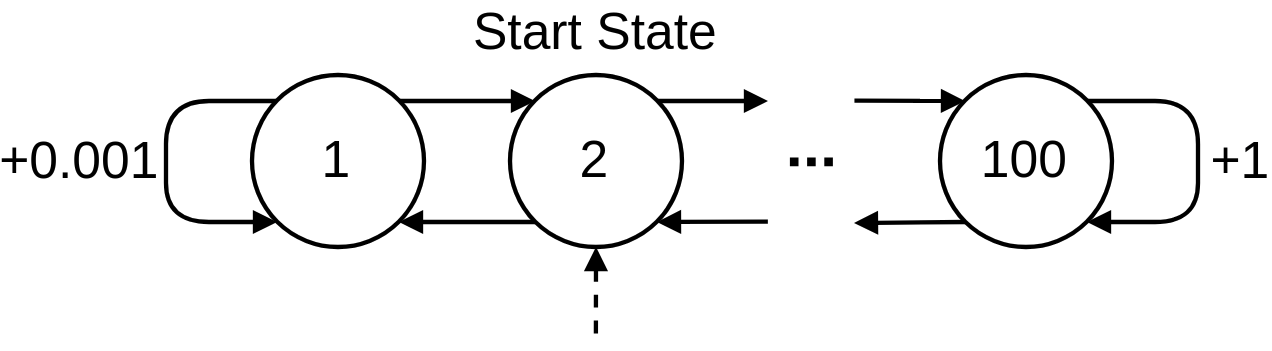}
    \caption{100 Chain environment.}
    \label{fig:chain}
\end{figure}

\subsubsection{Maze}{}

A 2-dimensional gridworld maze with a sparse reward in which the agent can move Up, Down, Left or Right. The agent starts each episode at a fixed location and must traverse through the maze in order to find the goal which provides $+10$ reward and terminates the episode, all other rewards are $0$. The agent interacts with the maze for 250 timesteps before being reset. Empty space is represented by a $0$, walls are $1$, the goal is $2$ and the player is $3$. The state representation is a greyscaled image of the entire grid where each entry is divided by $3$ to lie in $[0,1]$. The shape of the state representation is: $(24,24,1)$. Once again the effect of each action is randomised at each state at the beginning of training. Figure \ref{fig:maze} shows the structure of the maze environment.

\begin{figure}[h]
    \centering
    \includegraphics[width=0.16\textwidth]{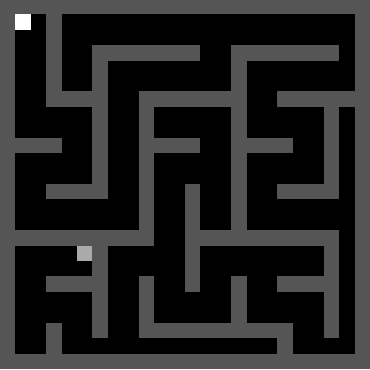}
    \caption{Maze environment.}
    \label{fig:maze}
\end{figure}

\subsubsection{Montezuma's Revenge}

We follow the suggestions in \citep{machado2018revisiting} and use the same environmental setup as used in \citep{burda2018exploration}.
Specifically, we use stick actions with a probability of $p=0.25$, a frame skip of $4$ and do not show a terminal state on the loss of life.

\subsection{Hyperparameters and Network architectures}

In all experiments we set $\gamma = 0.99$, use RMSProp with a learning rate of $0.0005$ and scale the gradient norms during training to be at most $5$.

\subsubsection{Randomised Chain}

The network used is a MLP with 2 hidden layers of 256 units and ReLU non-linearities.
We use $1$ step $Q$-Learning.

Training lasts for $100$k timesteps. 
$\epsilon$ is fixed at $0.01$ for all methods except for $\epsilon$-greedy DQN in which it is linearly decayed from $1$ to $0.01$ over $\{100, 50\text{k}, 100\text{k}\}$ timesteps. 
We train on a batch size of $64$ after every timestep with a replay buffer of size $10$k.
The target network is updated every $200$ timesteps.
The embedding size used for the counts is $32$.
We set $\beta=0.1$ for the scale of the count-based intrinsic motivation.

For reward subtraction we consider subtracting $\{0.1, 1, 10\}$ from the reward.
For an optimistic initialisation bias, we consider setting the final layer's bias to $\{0.1, 1, 10\}$. We consider both of the methods with and without count-based intrinsic motivation.

For \name{} and its ablations we consider: $M \in \{0.1, 0.5, 2, 10\},  C_{\text{action}} \in \{0.1, 1, 10\},  C_{\text{bootstrap}} \in \{0.01, 0.1, 1, 10\}$. 

For all methods we run 20 independent runs across the cross-product of all relevant parameters considered. We then sort them by the median test reward (largest area underneath the line) and report the median, lower and upper quartiles. 

The best hyperparameters we found were:

\textbf{DQN  $\epsilon$-greedy:} Decay rate: $100$ timesteps.

\textbf{Optimistic Initialisation Bias:} Bias: 1, Pseudocount intrinsic motivation: True.

\textbf{Reward Subtraction:} Constant to subtract: 1, Pseudocount intrinsic motivation: False.

\textbf{\name{}:} M: $0.5$, $ C_{\text{action}}$: $1$, $ C_{\text{bootstrap}}$: $1$.

\textbf{\name{} without Optimistic Bootstrapping:} M: $2$, $ C_{\text{action}}$: $10$.

\textbf{\name{} without Pseudocounts:} M: $2$, $ C_{\text{action}}$: $10$, $ C_{\text{bootstrap}}$: $10$.

For Figure \ref{fig:opiq_m_chain} the best hyperparameters for OPIQ with differing values of $M$ are:

\textbf{M: $0.1$:} $ C_{\text{action}}$: $10$, $ C_{\text{bootstrap}}$: $1$.\\
\textbf{M: $0.5$:} $ C_{\text{action}}$: $1$, $ C_{\text{bootstrap}}$: $1$.\\
\textbf{M: $2$:} $ C_{\text{action}}$: $10$, $ C_{\text{bootstrap}}$: $1$.\\
\textbf{M: $10$:} $ C_{\text{action}}$: $10$, $ C_{\text{bootstrap}}$: $10$.

\subsubsection{Maze}

The network used is the following feedforward network:

(State input: (24,24,1)) \\
$\rightarrow$ (Conv Layer, 3x3 Filter, 16 Channels, Stride 2) $\rightarrow$ ReLU \\
$\rightarrow$  (Conv Layer, 3x3 Filter, 16 Channels, Stride 2) $\rightarrow$ ReLU \\
$\rightarrow$ Flatten \\
$\rightarrow$ (FC Layer, 400 Units) $\rightarrow$ ReLU \\
$\rightarrow$ (FC Layer, 200 Units) \\
$\rightarrow$ $\mathcal{A}=4$ outputs.

We use $3$ step $Q$-Learning.

Training lasts for $1$mil timesteps.
$\epsilon$ is decayed linearly from $1$ to $0.01$ over $50$k timesteps for all methods except for $\epsilon$-greedy DQN in which it is linearly decayed from $1$ to $0.01$ over $\{100, 50\text{k}, 100\text{k}\}$ timesteps.
We train on a batch of $64$ after every timestep with a replay buffer of size $250$k.
The target network is updated every $1000$ timesteps.
The embedding dimension for the counts is $128$.

For DQN + PC we consider $\beta \in \{0.01, 0.1, 1, 10, 100\}$. For all other methods we set $\beta=0.1$ as it performed best. 

For reward subtraction we consider subtracting $\{0.1, 1, 10\}$ from the reward.
For an optimistic initialisation bias, we consider setting the final layer's bias to $\{0.1, 1, 10\}$. Both methods utilise a count-based intrinsic motivation.

For \name{} and its ablations we set $M=2$ since it worked best in preliminary experiments. We consider: $ C_{\text{action}} \in \{0.1, 1, 10, 100\},  C_{\text{bootstrap}} \in \{0.01, 0.1, 1, 10\}$. 

For the RND bonus we use the same architecture as the DQN for both the target and predictor networks, except the output is of size $128$ instead of $|\mathcal{A}|$.  
We scale the squared error by $\beta_{rnd} \in \{0.001, 0.01, 0.1, 1, 10, 100\}$:

For DQN + DORA we use the same architecture for the $E$-network as the DQN. We add a sigmoid non-linearity to the output and initialise the final layer's weights and bias to $0$ as described in \citep{choshen2018dora}.
We sweep across the scale of the intrinsic reward $\beta_{dora} \in \{\}$.
For DQN + DORA OA we use $\beta_{dora}=$ and sweep across $\beta_{dora\_action} \in \{\}$.

For BSP we use the following architecture:

Shared conv layers: \\
(State input: (24,24,1)) \\
$\rightarrow$ (Conv Layer, 3x3 Filter, 16 Channels, Stride 2) $\rightarrow$ ReLU \\
$\rightarrow$  (Conv Layer, 3x3 Filter, 16 Channels, Stride 2) $\rightarrow$ ReLU \\
$\rightarrow$ Flatten

$Q$-value Heads: \\
$\rightarrow$ (FC Layer, 400 Units) $\rightarrow$ ReLU \\
$\rightarrow$ (FC Layer, 200 Units) \\
$\rightarrow$ $\mathcal{A}=4$ outputs.

We use $K=10$ different bootstrapped DQN heads, and sweep over $\beta_{bsp} \in \{0.1, 1, 3, 10, 30, 100\}$. 

For all methods we run 8 independent runs across the cross-product of all relevant parameters considered. We then sort them by the median episodic reward (largest area underneath the line) and report the median, lower and upper quartiles. 

The best hyperparameters we found were:

\textbf{DQN  $\epsilon$-greedy:} Decay rate: $100$k timesteps.

\textbf{DQN + PC:} $\beta=0.1$.

\textbf{Optimistic Initialisation Bias:} Bias: 1. 

\textbf{Reward Subtraction:} Constant to subtract: $0.1$.

\textbf{DQN + RND:} $\beta_{rnd}=10$.

\textbf{DQN + DORA:} $\beta_{dora}=0.01$.

\textbf{DQN + DORA OA:} $\beta_{dora}=0.01$ and $\beta_{dora\_action}=0.1$.

\textbf{BSP:} $\beta_{bsp}=100$.

\textbf{\name{}:} M: $2$, $ C_{\text{action}}$: $100$, $ C_{\text{bootstrap}}$: $0.01$.

\textbf{\name{} without Optimistic Bootstrapping:} M: $2$, $ C_{\text{action}}$: $100$.

\textbf{\name{} without Pseudocounts:} M: $2$, $ C_{\text{action}}$: $100$, $ C_{\text{bootstrap}}$: $0.1$.

\subsubsection{Montezuma's Revenge}

The network used is the standard DQN used for Atari \citep{mnih2015human,bellemare2016unifying}.

We use $3$ step $Q$-Learning.

Training lasts for $12.5$mil timesteps ($50$mil frames in Atari).
$\epsilon$ is decayed linearly from $1$ to $0.01$ over $1$mil timesteps.
We train on a batch of $32$ after every $4$th timestep with a replay buffer of size $1$mil.
The target network is updated every $8000$ timesteps.

For all methods we consider $\beta_{mmc} \in \{0.005, 0.01, 0.025\}$.

For DQN + PC we consider $\beta \in \{0.01, 0.1, 1\}$.

For \name{} and its ablations we set $M=2$.
We consider: $ C_{\text{action}} \in \{0.1, 1\},  C_{\text{bootstrap}} \in \{0.01, 0.1\}$, $\beta \in \{0.01, 0.1\}$.

For the RND bonus we use the same architectures as in \citep{burda2018exploration} (target network is smaller than the learned predictor network) except we use ReLU non-linearity. The output is the same of size $512$.  
We scale the squared error by $\beta_{rnd} \in \{0.001, 0.01, 0.1, 1\}$:

For BSP we use the same architecture as in \citep{osband2018randomized}.

We use $K=10$ different bootstrapped DQN heads, and sweep over $\beta_{bsp} \in \{0.1, 1, 10, 100\}$. 

For all methods we run 4 independent runs across the cross-product of all relevant parameters considered. We then sort them by the median maximum episodic reward (largest area underneath the line) and report the median, lower and upper quartiles. 

The best hyperparameters we found were:

\textbf{DQN + PC:} $\beta=0.01, \beta_{mmc}=0.01$.

\textbf{DQN + RND:} $\beta_{rnd}=0.1, \beta_{mmc}=0.01$.

\textbf{BSP:} $\beta_{bsp}=0.1, \beta_{mmc}=0.025$.

\textbf{\name{}:} M$=2$, $ C_{\text{action}}=0.1$, $ C_{\text{bootstrap}}=0.01$, $\beta_{mmc}=0.01$.

\textbf{\name{} without Optimistic Bootstrapping:} M$=2$, $ C_{\text{action}}=0.1, \beta_{mmc}=0.005$.

\subsection{Baselines training details}
\textbf{DQN + RND}: We do a single gradient descent step on a minibatch of the 32 most recently visited states. We also recompute the intrinsic rewards when sampling minibatches to train the DQN. The intrinsic reward used for a state $s$, is the squared error between the predictor network and the target network $\beta_{rnd} ||\text{predictor}(s) - \text{target}(s)||_2^2$.

\textbf{DQN + DORA}: We train the $E$-values network using $n$-step SARSA (same $n$ as the DQN) with $\gamma_E=0.99$. 
We maintain a replay buffer of size $(\text{batch size} * 4)$ and sample batch size elements to train every timestep. 
The intrinsic reward we use is $\frac{\beta_{dora}}{\sqrt{-\log E(s,a)}}$. 

\textbf{DQN + DORA OA}: We train the DQN + DORA agent described above and additionally augment the $Q$-values used for action selection with $\frac{\beta_{dora\_action}}{\sqrt{-\log E(s,a)}}$.

\textbf{BSP}: We train each Bootstrapped DQN head on all of the data from the replay buffer (as is done in \citep{Bootstrapped_DQN,osband2018randomized}. We normalise the gradients of the shared part of the network by $1/K$, where $K$ is the number of heads. 
The output of each head is $Q_k + \beta_{bsp} p_k$, where $p_k$ is a randomly initialised network (of the same architecture as $Q_k$) which is kept fixed throughout training. $\beta_{bsp}$ is a hyperparameter governing the scale of the prior regularisation.

\subsection{Mixed Monte Carlo Return}
\label{sec:mixed_monte_carlo}
For our experiments on Montezuma's Revenge we additionally mixed the $3$ step $Q$-Learning target with the environmental rewards monte carlo return for the episode.

That is, the $3$ step targets $y_t$ become:
$$y_{mmc} := (1 - \beta_{mmc}) y_t + \beta_{mmc} (\sum_{i=0}^{\infty} \gamma^i r(s_{t+i}, a_{t+i}))$$

If the episode hasn't finished yet, we used $0$ for the monte carlo return.

Our implementation differs from \citep{bellemare2016unifying,Neural_Density_Models} in that we do not use the intrinsic rewards as part of the monte carlo return.
This is because we recompute the intrinsic rewards whenever we are using them as part of the targets for training, and recomputing all the intrinsic rewards for an entire episode (which can be over $1000$ timesteps) is computationally prohibitive. 
\section{Further Results}

\subsection{Randomised Chain}

\begin{figure}[h!]
\includegraphics[width=0.49\textwidth]{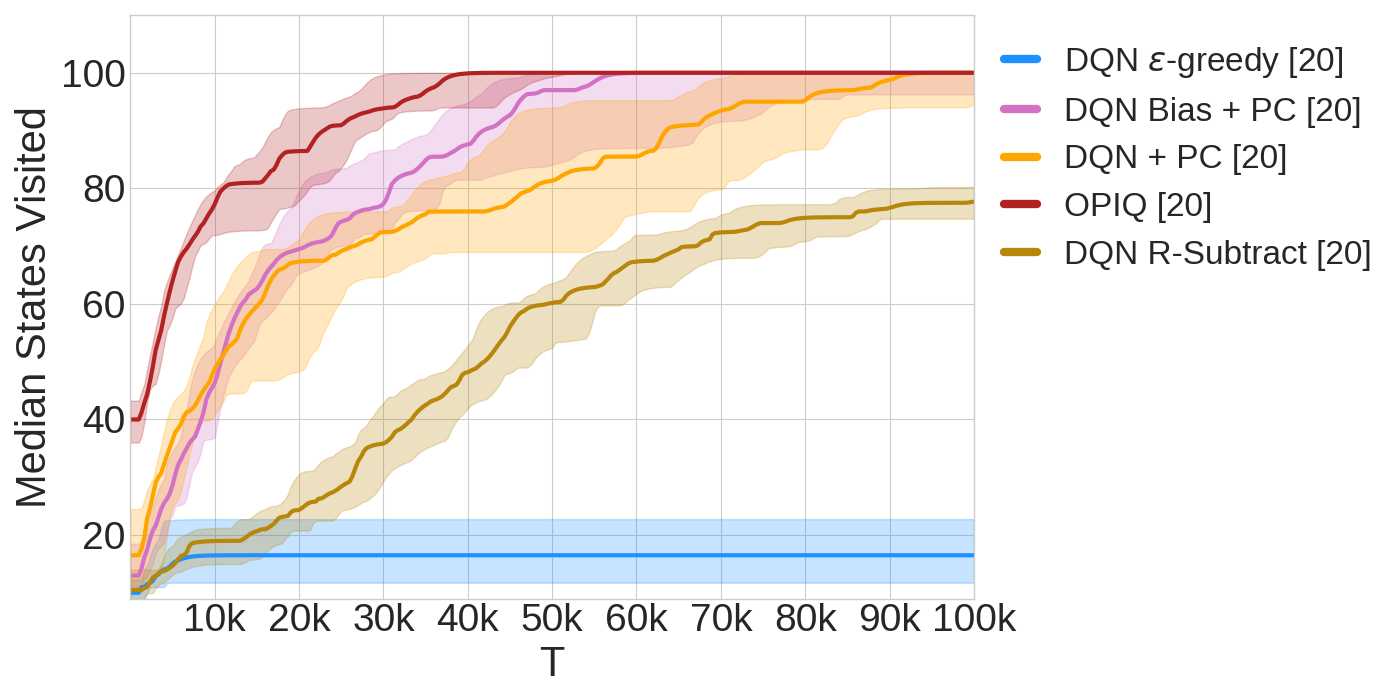}
\includegraphics[width=0.49\textwidth]{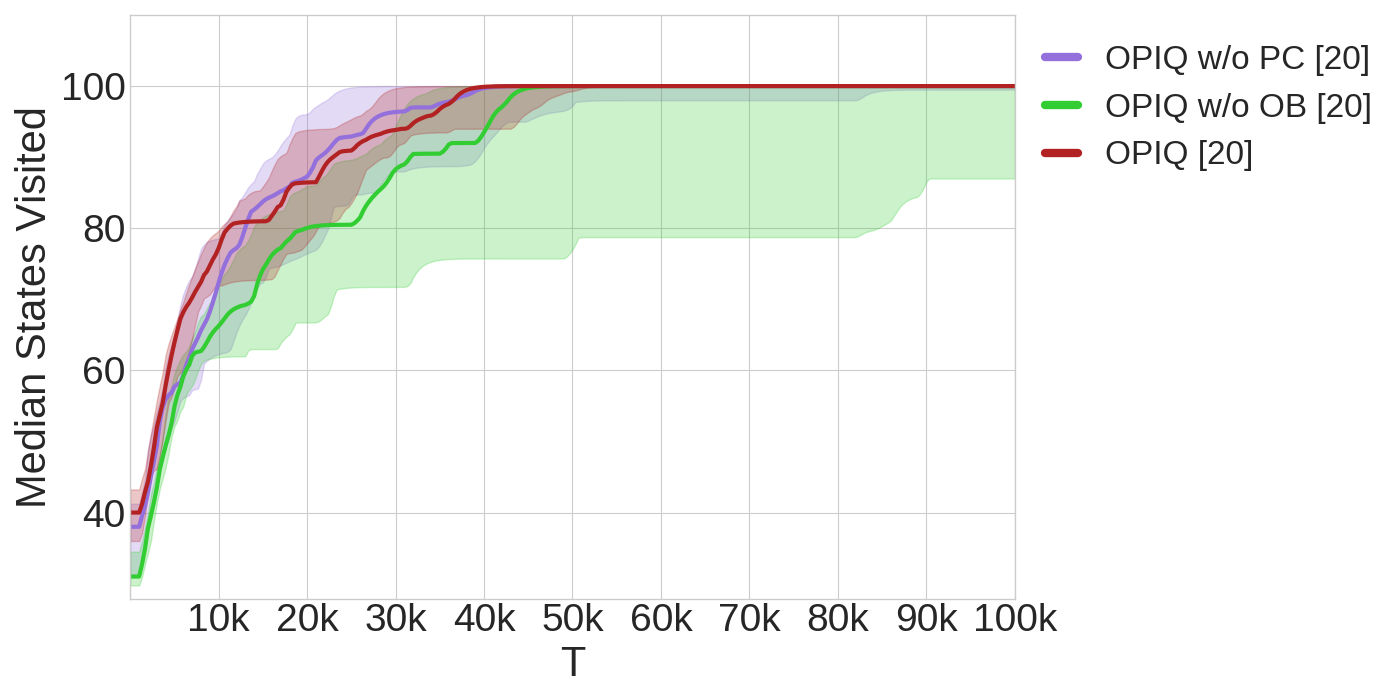}
\caption{The number of distinct states visited over training for the chain environment. The median across 20 seeds is plotted and the 25\%-75\% quartile is shown shaded.}
\end{figure} 

\begin{figure}[h!]
\includegraphics[width=0.49\textwidth]{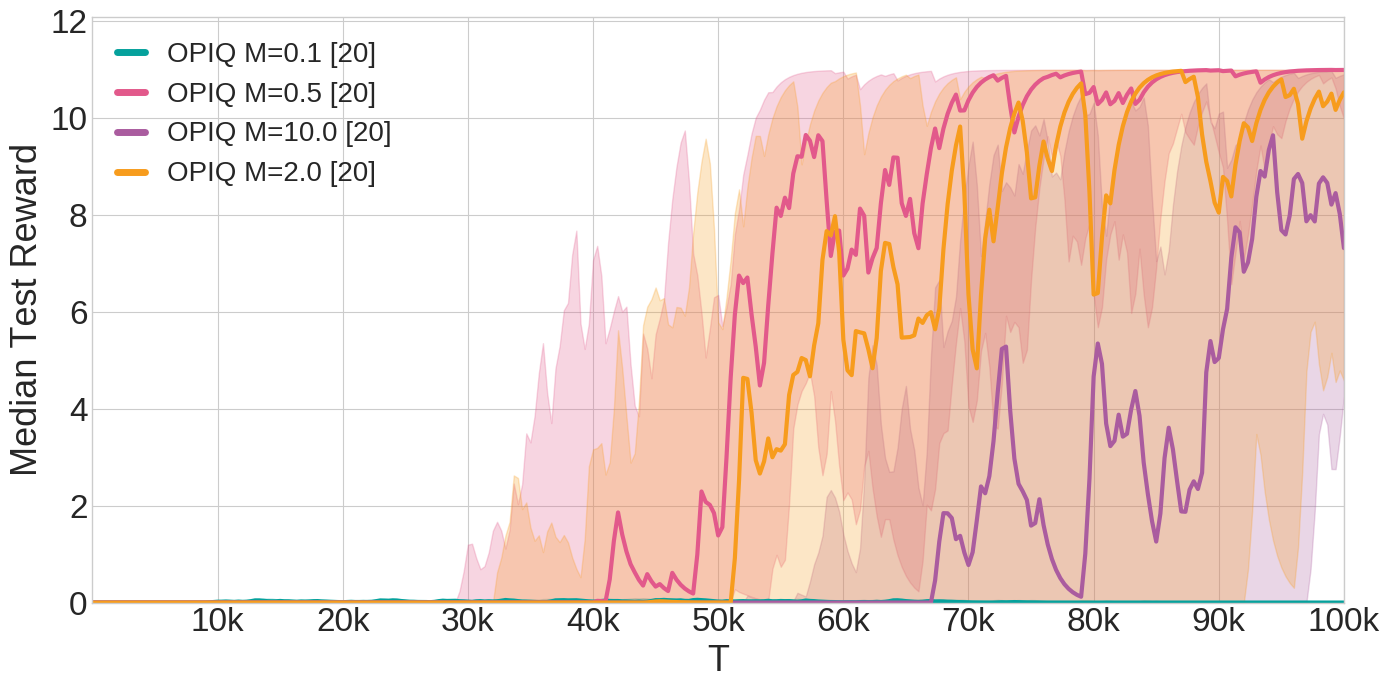}
\includegraphics[width=0.49\textwidth]{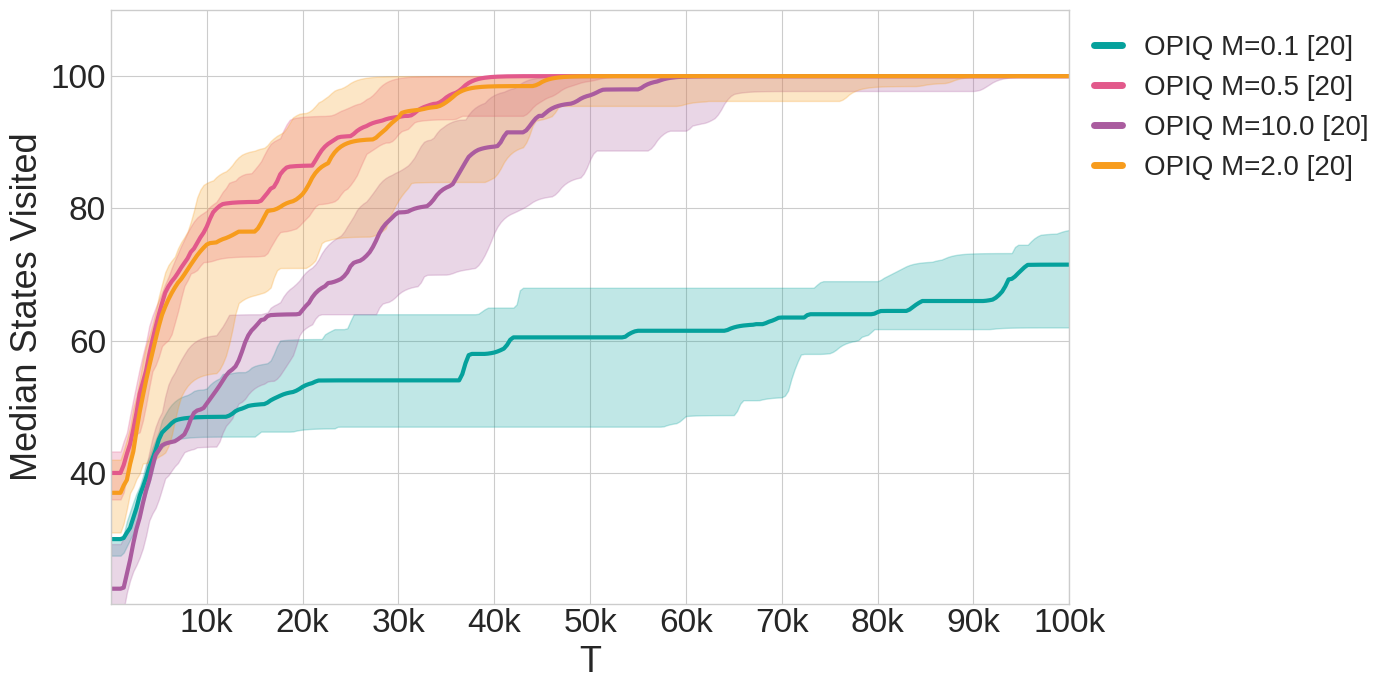}
\caption{Comparing the performance of $M \in \{0.1, 0.5, 2, 10\}$ on the chain environment. The best hyperparameter combination for the differing values of $M$ is shown. The median across 20 seeds is plotted and the 25\%-75\% quartile is shown shaded.}
\label{fig:opiq_m_chain}
\end{figure} 

We can see that \name{} and ablations explore the environment much more quickly than the count-based baselines. The ablation without optimistic bootstrapping exhibits significantly more variance than the other ablations, showing the importance of optimism during bootstrapping. 
On this simple task the ablation without count-based intrinsic motivation performs on par with the full \name{}. This is most likely due to the simpler nature of the environment that makes propagating rewards much easier than the Maze.
The importance of directed exploration is made abundantly clear by the $\epsilon$-greedy baseline that fails to explore much of the environment.

Figure \ref{fig:opiq_m_chain} compares \name{} with differing values of $M$. We can clearly see that a small value of $0.1$ results in insufficient exploration, due to the over-exploration of already visited state-action pairs. Additionally if $M$ is too large then the rate of exploration suffers due to the decreased optimism. On this task we found that $M=0.5$ performed best, but on the harder Maze environment we found that $M=2$ was better in preliminary experiments.

\subsection{Maze}

\begin{figure}[h]
\centering
\includegraphics[width=0.49\textwidth]{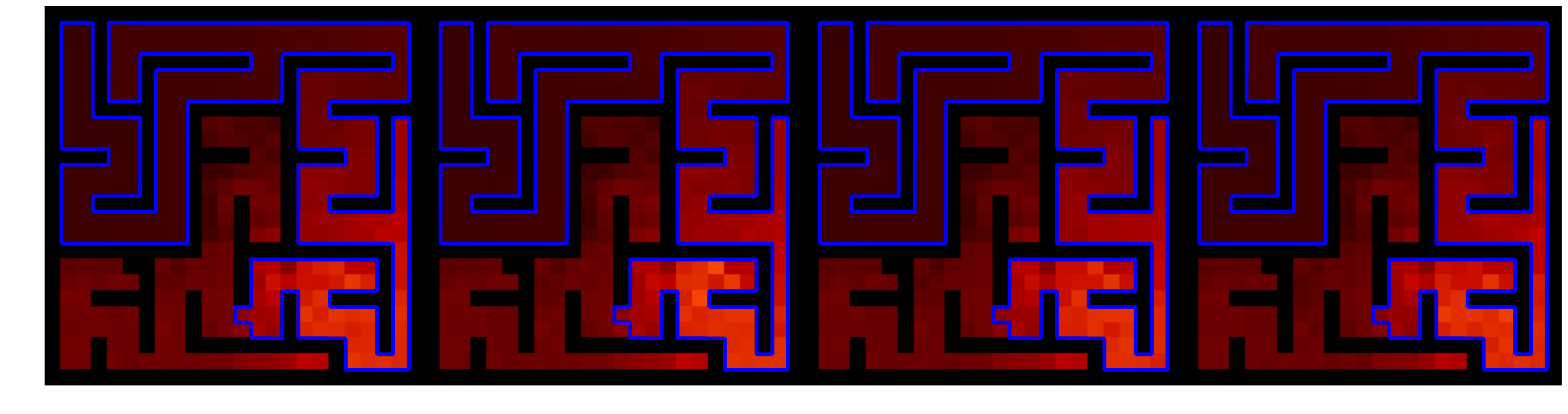}
\caption{The $Q^+$-values \name{} used during bootstrapping with $ C_{\text{bootstrap}}=0.01$.}
\label{fig:q_val_bootstrap}
\end{figure}

Figure \ref{fig:q_val_bootstrap} shows the values used
during bootstrapping for \name{}. 
These $Q$-values show optimism near the novel state-action pairs which
provides an incentive for the agent to return to this area of the state space.

\subsection{Montezuma's Revenge}

\begin{figure}[h!]
\includegraphics[width=0.44\textwidth]{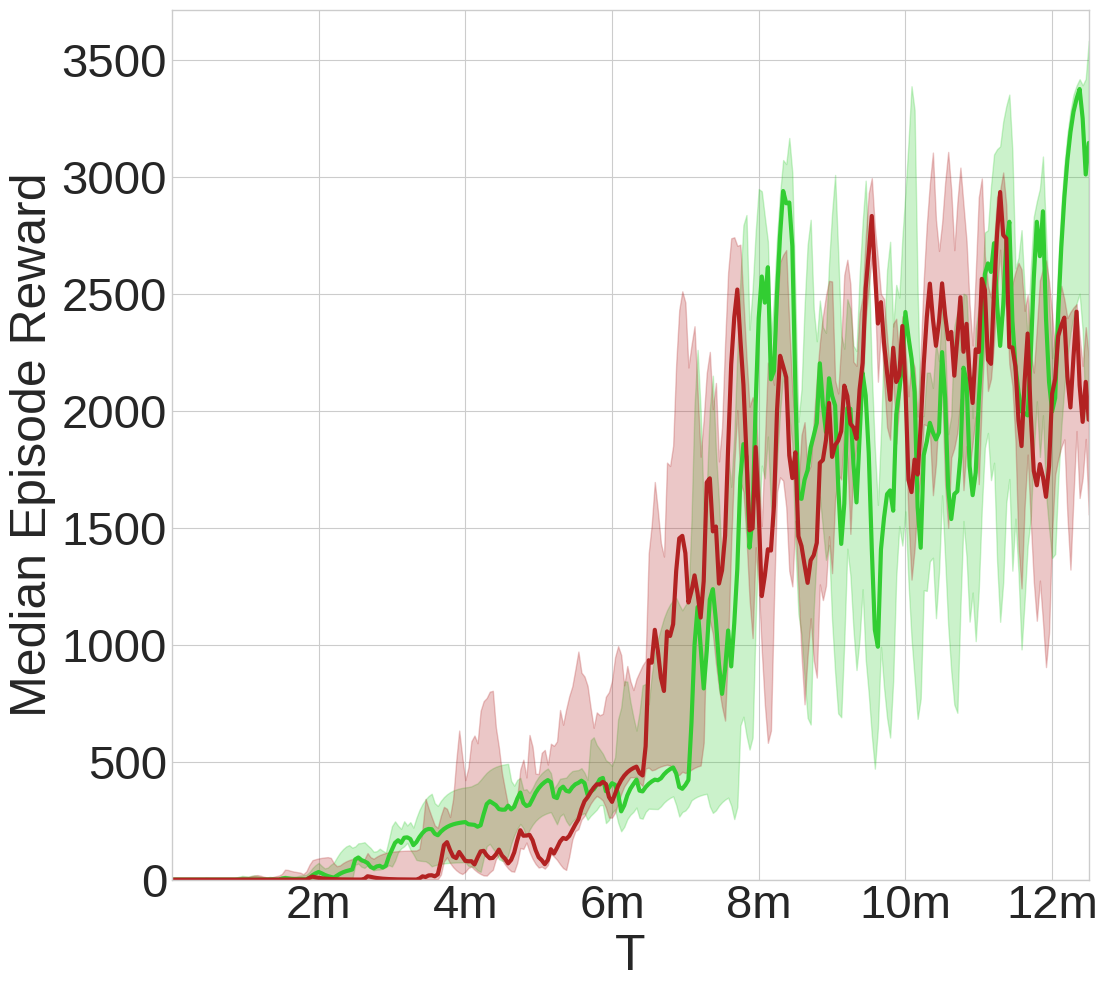}
\includegraphics[width=0.55\textwidth]{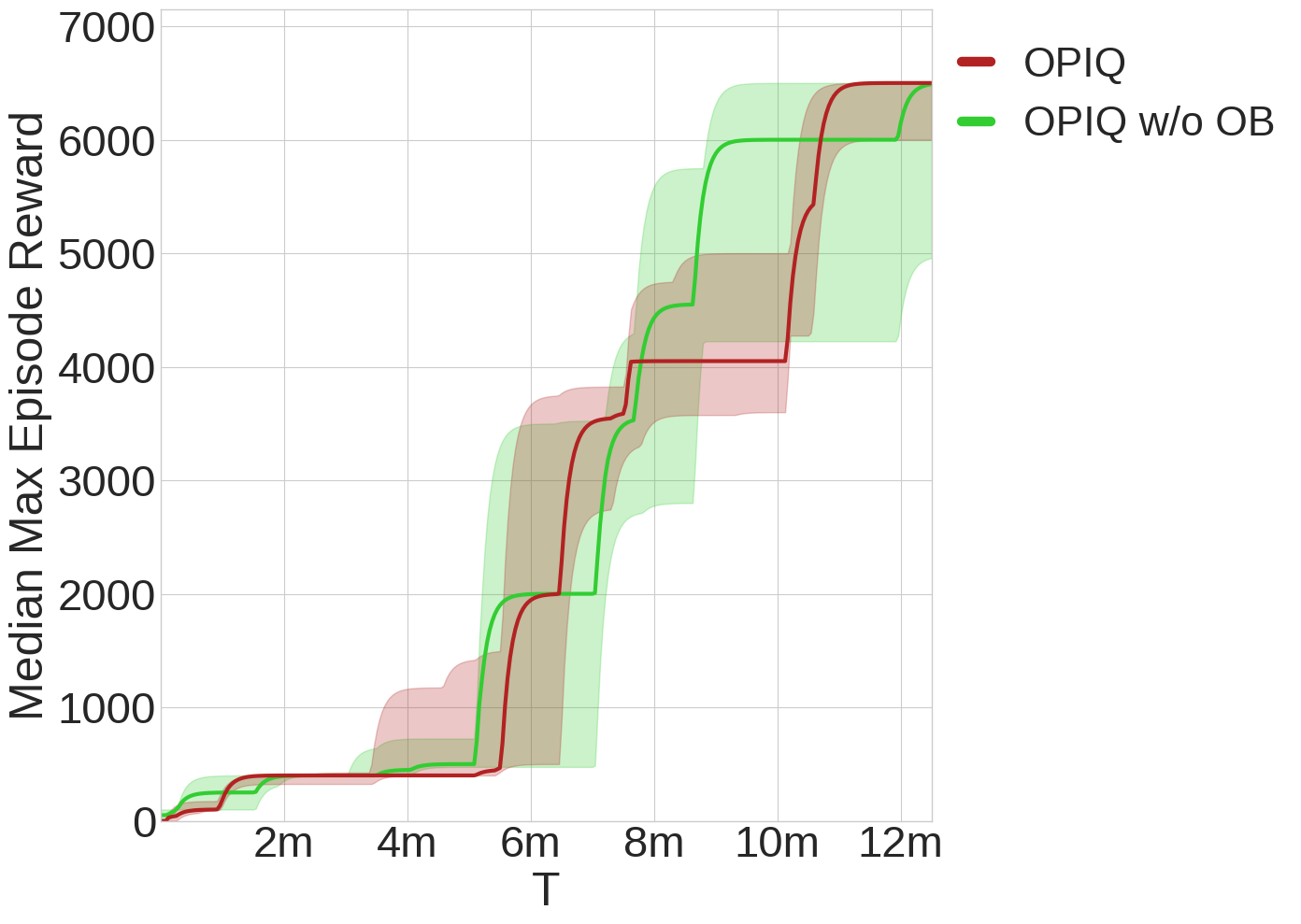}
\includegraphics[width=0.55\textwidth]{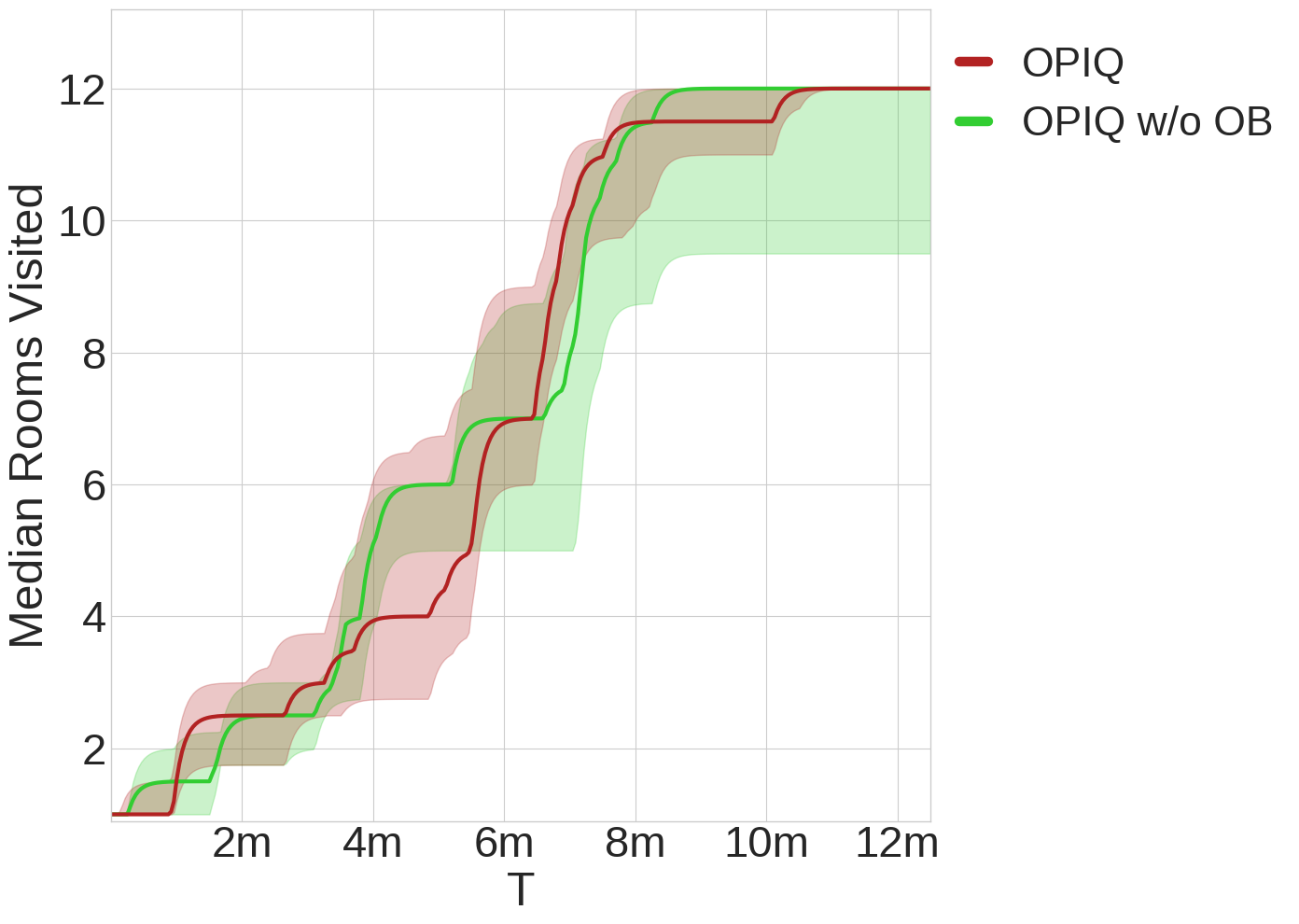}
\caption{Results for Montezuma's Revenge comparing \name{} and ablation. Median across 4 seeds is plotted and the 25\%-75\% quartile is shown shaded.}
\label{fig:montezuma_ablation}
\end{figure}

Figure \ref{fig:montezuma_ablation} shows further results on Montezuma's Revenge comparing \name{} and its ablation without optimistic bootstraping (\name{} w/o OB).
Similarly to the Chain and Maze results, we can see that \name{} w/o OB performs similarly to the full \name{} but has a higher variance across seeds.
\section{Motivations}

\begin{figure}
\centering
\includegraphics[width=0.4\textwidth]{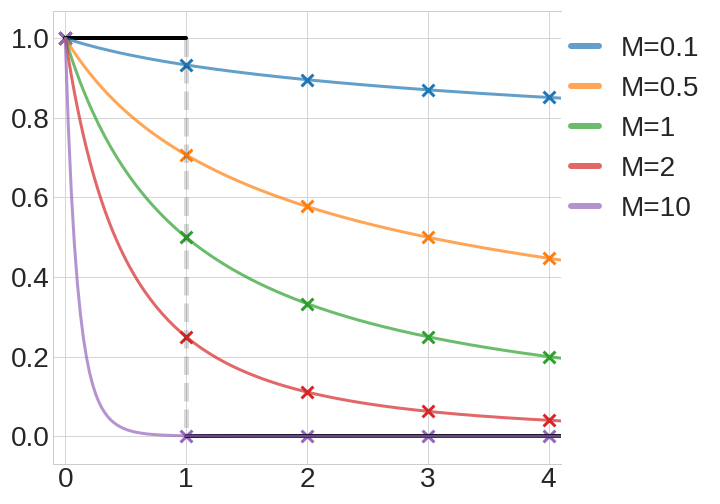}
\caption{$\frac{1}{(x + 1)^M}$ for various values of $M$, and the indicator function $\mathbbm{1}\{x < 1\}$ shown in black. Higher values of $M$ provide a better approximation at integer values of $x$ (shown as  crosses).}
\label{fig:approx_indicator}
\end{figure}

Figure \ref{fig:approx_indicator} compares various values of $M$ with the indicator function $\mathbbm{1}\{x < 1\}$.

\section{Necessity for Optimism During Action Selection}
\label{sec:optimistic_action_example}

To emphasise the necessity of optimistic $Q$-value estimates during exploration, we analyse the  simple failure case for pessimistically initialised greedy $Q$-learning provided in the introduction. We use Algorithm \ref{alg:q^+}, but use $Q$ instead of $Q^+$ for action selection. We will assume the agent will act greedily with respect to its $Q$-value estimates and break ties uniformly:
$$a_t \leftarrow \text{Uniform}\{\argmax_a Q_t(s_t, a) \}.$$

Consider the single state MDP in Figure \ref{fig:im_mdp} with $H=1$. We use this MDP to show that with 0.5 probability pessimistically initialised greedy $Q$-learning never finds the optimal policy.

\begin{figure}[h] 
\centering
\includegraphics[width=0.2\textwidth]{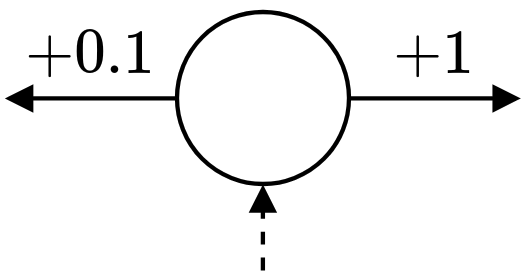}
\caption{A simple failure case for pessimistically initialised greedy $Q$-learning. There is 1 state with 2 actions and $H=1$. The agent receives $0.1$ reward for the left action and $1$ for the right action.}
\label{fig:im_mdp}
\end{figure} 

The agent receives a reward of $+1$ for selecting the right action and $0.1$ otherwise. Therefore the optimal policy is to select the right action. Now consider the first episode: $\forall a, Q_1(s,a) = 0$. Thus, the agent selects an action at random with uniform probability. 
If it selects the left action, it updates: 
$$Q_1(s,L) = \eta_1 (\underbrace{0.1}_{\text{MDP reward}} + \underbrace{r_\text{int}}_{\text{Intrinsic Reward}} + \underbrace{b}_{\text{Bootstrap}}) > 0.$$ 
Thus, in the second episode it selects the left action again, since $Q_1(s,L) > 0 = Q_1(s,R)$. Our estimate of $Q_1(s,L)$ never drops below $0.1$, and so the right action is never taken. Thus, with probability of $\frac{1}{2}$ it never selects the correct action (also a linear regret of $0.9T$).

This counterexample applies for any non-negative intrinsic motivation (including no intrinsic motivation), and is unaffected if we utilise optimistic bootstrapping or not.

\section{Necessity for Intrinsic Motivation Bonus}
\label{sec:need_im_tabular}

Despite introducing an extra optimism term with a tunable hyperparameter $M$, \name{} still requires the intrinsic motivation term $b^T_i$ to ensure it does not under-explore in stochastic environments.

We will prove that \name{} without the intrinsic motivation term $b^T_i$ does not satisfy Theorem \ref{th:augmentedq_pacmdp}. 
Specifically we will show that there exists a 1 state, 2 action MDP with stochastic reward function such that for all $M>0$ the probability of incurring linear regret is greater than the allowed failure probability $p$. We choose to use stochastic rewards as opposed to stochastic transitions for a simpler proof.

\begin{figure}[h] 
\centering
\includegraphics[width=0.4\textwidth]{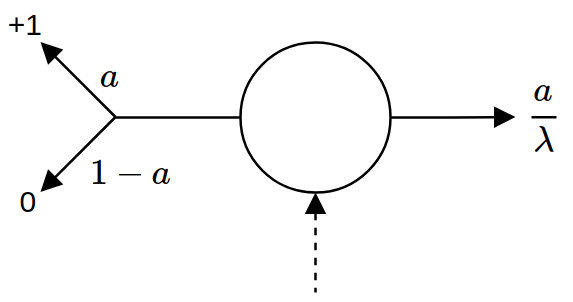}
\caption{The parametrised MDP.}
\label{fig:param_mdp}
\end{figure} 

The MDP we will use is shown in Figure \ref{fig:param_mdp}, where $\lambda > 1$ and $a \in (0,1)$ s.t $p < 1-a$.
$H=1, S=1$ and $A=2$.
The reward function for the left action is stochastic, and will return $+1$ reward with probability $a$ and $0$ otherwise. The reward for the right action is always $a/\lambda$.

Let $p>0$, the probability with which we are allowed to incur a total regret not bounded by the theorem.
\name{} cannot depend on the value of $\lambda$ or $a$ as they are unknown.

Pick $\lambda$ s.t $M > \frac{\log(\frac{\lambda}{a})}{\log(\frac{\log(p)}{\log(1-a)})}$.

\name{} will recover the sub-optimal policy of taking the right action if every time we take the left action we receive a 0 reward. 
This will happen since our $Q^+$-value estimate for the left action will eventually drop below the $Q$-value estimate for the right action which is $a/\lambda > 0$.
The sup-optimal policy will incur a linear regret, which is not bounded by the theorem.

Our probability of failure is at least $(1-a)^R$, where $R$ is the number of times we select the left action, which decreases as $R$ increases. This corresponds to receiving a $0$ reward for every one of the $R$ left transitions we take. Note that $(1-a)^R$ is an underestimate of the probability of failure.

For the first 2 episodes we will select both actions, and with probability $(1-a)$ the left action will return $0$ reward. Our $Q$-values will then be: 
$Q_1(s, L)=0, Q_1(s,R)=a/\lambda$.

It is possible to take the left action as long as 
$$\frac{1}{(R+1)^M} \geq \frac{a}{\lambda},$$
since the optimistic bonus for the right action decays to 0.

This then provides a very loose upper bound for $R$ as $(\frac{\lambda}{a})^{1/M}$, which then leads to a further underestimation of the probability of failure.

Assume for a contradiction that $(1-a)^R < p$:
\begin{align}
(1-a)^R < p \notag
&\iff (1-a)^{(\lambda/a)^{1/M}} < p \notag\\
&\iff (\lambda/a)^{1/M} \log(1-a) < \log(p) \notag\\
&\iff (\lambda/a)^{1/M} > \log(p) / \log(1-a) \notag\\
&\iff (1/M) \log(\lambda/a) > \log( \log(p) / \log(1-a) ) \notag\\
&\iff M < \log(\lambda / a) / \log( \log(p) / \log(1-a) ) \notag\\
\end{align}

This provides our contradiction as we choose $\lambda$ s.t $M > \log(\lambda / a) / \log( \log(p) / \log(1-a) )$.
We can always pick such a $\lambda$ because $\log(\lambda/a)$ can get arbitrarily close to $0$.

So our probability of failure (of which $(1-a)^R$ is a severe underestimate) is greater than the allowed probability of failure $p$.

\section{Proof of Theorem 1}
\label{sec:tabular_thm1_proof}
\augmentedQPacmdp*

\name{} is heavily based on UCB-H \citep{jin2018q}, and as such the proof very closely mirrors its proof except for a few minor differences. For completeness, we reproduce the entirety of the proof with the minor adjustments required for our scheme.

The proof is concerned with bounding the regret of the algorithm after $K$ episodes. The algorithm we follow takes as input the value of $K$, and changes the magnitudes of $b^T_N$ based on it. 

We will make use of a corollary to Azuma's inequality multiple times during the proof.

\theorem(Azuma's Inequality). Let $Z_0, ..., Z_n$ be a martingale sequence of random variables such that $\forall i \exists c_i: |Z_i - Z_{i-1}| < c_i$ almost surely, then:
$$P(Z_n - Z_0 \geq t) \leq \exp{\frac{-t^2}{2\sum_{i=1}^nc_i^2}}$$

\corollary Let $Z_0, ..., Z_n$ be a martingale sequence of random variables such that $\forall i \exists c_i: |Z_i - Z_{i-1}| < c_i$ almost surely, then with probability at least $1-\delta$:
$$|Z_n - Z_0| \leq \sqrt{2 (\sum_{i=1}^n c_i^2) \log{\frac{2}{\delta}}}$$ 

\begin{lemma} \citep{jin2018q} \\
\label{lemma:learning_rate}
Define $\eta_N^0 = \prod_{j=1}^{N}(1 - \eta_j), \eta_{N}^{i} = \eta_i \prod_{j=i+1}^{N}(1 - \eta_j)$ \\
The following properties hold for $\eta_N^i$:
\begin{itemize}
    \item $\frac{1}{\sqrt{N}} \leq \sum_{i=1}^N \frac{\eta_N^i}{\sqrt{i}} \leq \frac{2}{\sqrt{N}}, \forall N \geq 1$
    \item $\max_{i = 1,...,N} \eta_N^i \leq \frac{2H}{N}, \sum_{i=1}^{N}(\eta^i_N)^2 \leq \frac{2H}{N}, \forall N \geq 1$
    \item $\sum_{N=i}^{\infty}\eta_N^i = 1 + \frac{1}{H}, \forall i \geq 1$ 
\end{itemize}
\end{lemma}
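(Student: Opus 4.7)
The proof is a careful bookkeeping exercise that hinges on the specific choice $\eta_j = (H+1)/(H+j)$, and all three claims flow from a single closed form for $\eta_N^i$ combined with the one-step recursion $\eta_N^i = (1-\eta_N)\,\eta_{N-1}^i$ for $i<N$ and $\eta_N^N = \eta_N$ (with the empty-product convention).

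First I would substitute $1-\eta_j = (j-1)/(H+j)$ into the product defining $\eta_N^i$ and collapse it to the closed form
\[
\eta_N^i \;=\; \frac{H+1}{H+i}\,\prod_{j=i+1}^{N}\frac{j-1}{H+j} \;=\; \frac{(H+1)\,(N-1)!\,(H+i-1)!}{(i-1)!\,(H+N)!},
\]
which drives everything that follows.

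Next I would tackle the sum-over-$N$ identity (the third bullet), which I expect to be the main obstacle. The key step is spotting the algebraic identity
\[
\frac{(N-1)!}{(H+N)!} \;=\; \frac{1}{H}\!\left[\,\frac{(N-1)!}{(H+N-1)!} \,-\, \frac{N!}{(H+N)!}\,\right]\!,
\]
which one verifies by clearing denominators. Summing this over $N\geq i$ makes the right-hand side telescope, the tail vanishes, and when the prefactors from the closed form are reinstated everything collapses to exactly $(H+1)/H = 1 + 1/H$. Recognising this telescoping is the only really non-obvious move: the choice $\eta_N = (H+1)/(H+N)$ is tailored precisely so that this tail sum has a clean closed form.

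For the second bullet, I would use the closed form to compute $\eta_N^{i+1}/\eta_N^i = (H+i)/i > 1$, so $\eta_N^i$ is strictly increasing in $i$ and its maximum is attained at $i = N$, giving $\max_i \eta_N^i = \eta_N = (H+1)/(H+N) \leq 2H/N$ by an elementary inequality valid for $H \geq 1$. A short induction establishes $\eta_N^0 + \sum_{i=1}^N \eta_N^i = 1$, hence $\sum_{i=1}^N \eta_N^i \leq 1$, and therefore $\sum_i (\eta_N^i)^2 \leq (\max_i \eta_N^i)(\sum_i \eta_N^i) \leq 2H/N$.

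Finally, for the first bullet, set $U_N := \sum_{i=1}^N \eta_N^i/\sqrt{i}$ and exploit the recursion $U_N = (1-\eta_N)\,U_{N-1} + \eta_N/\sqrt{N}$. I would induct on $N$, with base case $N=1$ giving $U_1 = 1 \in [1,2]$. After substituting $\eta_N = (H+1)/(H+N)$, the upper-bound step reduces to $\sqrt{N(N-1)} + (H+1)/2 \leq H+N$, which follows from $\sqrt{N(N-1)} \leq N-\tfrac{1}{2}$; the lower-bound step reduces to $\sqrt{N(N-1)} \geq N-1$, which is immediate from $N(N-1) \geq (N-1)^2$. Apart from the telescoping insight behind the third bullet, every step is a routine manipulation driven by the closed form and the recursion.
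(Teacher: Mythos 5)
Your proof is correct. Note that the paper itself gives no proof of this lemma --- it is imported verbatim from \citet{jin2018q} --- so there is nothing internal to compare against; your argument is essentially the standard one from that reference (the closed form $\eta_N^i = \frac{(H+1)(N-1)!\,(H+i-1)!}{(i-1)!\,(H+N)!}$, the telescoping identity for the tail sum giving $1+\frac1H$, monotonicity in $i$ plus $\sum_i \eta_N^i \le 1$ for the second bullet, and induction on $N$ via $U_N = (1-\eta_N)U_{N-1} + \eta_N/\sqrt{N}$ for the first), and all the elementary inequalities you invoke check out.
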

\bigskip

\begin{lemma} Adapted slightly from \citep{jin2018q} \\
Define $V_t^k(s) := \min\{H, \max_{a'}Q^{+,k}_{t}(s, a')\}, \forall s \in S$. \\ 
For notational convenience, we also define $[P^{k}_{t} V_{t+1}](s^k_t,a^k_t) := V_{t+1}(s^k_{t+1})$, where $s^k_t$ was the state encountered at episode $k$ at timestep $t$ (similarly for $a^k_t$), \\and $[P V_{t+1}](s,a) := \E_{s' \sim P_t(\cdot|s_t=s,a_t=a)}\left[ V_{t+1}(s') \right]$.

For any $(s,a,t) \in \mathcal{S} \times \mathcal{A} \times [H]$, episode $k \leq K$ and $N=N(s,a,t)$. Suppose $(s,a)$ was previously taken at step $t$ of episodes $k_1,...,k_N < k$. Then:
\begin{equation}
\begin{split}
(Q^{+,k}_t - Q^*_t)(s,a) & = -\eta^0_N Q^*_t(s,a) \notag \\ 
& + \sum_{i=1}^N \eta_N^i[(V^{k_i}_{t+1} - V^*_{t+1})(s^{k_i}_{t+1}) + (P_t - P^{k_i}_t)V^*_{t+1}(s,a) + b_i^{T}] + \frac{H}{(N+1)^M}
\end{split}
\end{equation}
\label{lemma:q+-q*}
\end{lemma}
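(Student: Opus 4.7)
The plan is to unroll the pessimistically initialised $Q$-learning update in Algorithm \ref{alg:q^+} and then compare the result to the Bellman equation for $Q^*_t$. Fix $(s,a,t)$ and episode $k$, and let $N = N(s,a,t)$ with the previous $N$ visits to this triple occurring at episodes $k_1 < \cdots < k_N < k$. Between consecutive visits $k_i$ and $k_{i+1}$ the value $Q_t(s,a)$ is untouched, so by induction on $i$ the recursion in Algorithm \ref{alg:q^+}, together with the definitions $\eta^i_N = \eta_i \prod_{j>i}(1-\eta_j)$ and $\eta^0_N = \prod_{j\le N}(1-\eta_j)$ from Lemma \ref{lemma:learning_rate}, yields
\begin{equation*}
Q^k_t(s,a) \;=\; \eta^0_N\, Q^{k_1}_t(s,a) \;+\; \sum_{i=1}^N \eta^i_N \bigl[\, r^{k_i}_t + b^T_i + V^{k_i}_{t+1}(s^{k_i}_{t+1})\,\bigr].
\end{equation*}
The pessimistic initialisation $Q^{k_1}_t(s,a) = 0$ kills the first summand, and adding the optimism bonus produces a closed form for $Q^{+,k}_t(s,a)$.

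Next I would invoke the identity $\eta^0_N + \sum_{i=1}^N \eta^i_N = 1$, which is immediate from the recursive definition, to write $Q^*_t(s,a) = \eta^0_N Q^*_t(s,a) + \sum_i \eta^i_N Q^*_t(s,a)$, and then substitute the Bellman equation $Q^*_t(s,a) = r_t(s,a) + [P_t V^*_{t+1}](s,a)$ into each term of the sum. Subtracting from the previous expression produces
\begin{equation*}
(Q^{+,k}_t - Q^*_t)(s,a) \;=\; -\eta^0_N\, Q^*_t(s,a) + \sum_{i=1}^N \eta^i_N\bigl[\, b^T_i + V^{k_i}_{t+1}(s^{k_i}_{t+1}) - [P_t V^*_{t+1}](s,a)\,\bigr] + \tfrac{H}{(N+1)^M},
\end{equation*}
where $r^{k_i}_t = r_t(s,a)$ in the deterministic-reward case (otherwise the reward noise is a martingale increment to be absorbed later when Azuma is applied). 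The final algebraic step is to add and subtract $V^*_{t+1}(s^{k_i}_{t+1})$ inside each summand; using the notational convention $V^*_{t+1}(s^{k_i}_{t+1}) = [P^{k_i}_t V^*_{t+1}](s,a)$, this splits the bracketed quantity into the value-error term $(V^{k_i}_{t+1} - V^*_{t+1})(s^{k_i}_{t+1})$ and a transition-sampling term of the form $(P^{k_i}_t - P_t) V^*_{t+1}(s,a)$, matching the claimed decomposition up to the sign convention used for $P_t - P^{k_i}_t$.

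I do not expect any real obstacle: the entire argument is algebraic bookkeeping. The only part that requires some care is the induction, because visits to $(s,a,t)$ occur at non-consecutive episodes and one must verify that $Q_t(s,a)$ is held constant between visits so that the telescoping product $\prod_j (1-\eta_j)$ is indexed by the visit counter $i$ rather than by the episode index $k$. The $\frac{H}{(N+1)^M}$ term is simply carried along from the definition of $Q^+$ and is exactly what distinguishes this lemma from its UCB-H counterpart, where an optimistic initialisation at $H$ would instead contribute an $\eta^0_N\, H$ term (combined with $-\eta^0_N Q^*_t$ into $\eta^0_N(H - Q^*_t)$) to the decomposition.
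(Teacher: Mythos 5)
Your proposal is correct and follows essentially the same route as the paper: unroll the update into the weighted sum $\sum_{i=1}^N \eta^i_N[\,\cdot\,]$ plus the $\frac{H}{(N+1)^M}$ bonus (with the pessimistic initialisation killing the $\eta^0_N$ term on the $Q^+$ side), rewrite $Q^*_t$ via the Bellman optimality equation and the identity $\eta^0_N + \sum_{i=1}^N \eta^i_N = 1$, insert $\pm[P^{k_i}_t V^*_{t+1}](s,a)$, and subtract. Your hedge about the sign is also warranted --- the subtraction actually produces $(P^{k_i}_t - P_t)V^*_{t+1}(s,a)$ rather than the $(P_t - P^{k_i}_t)$ printed in the lemma statement, but this is immaterial downstream because the term is only ever bounded in absolute value via Azuma's inequality.
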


\begin{proof}

We have the following recursive formula for $Q^+$ at episode $k$ and timestep $t$:
\begin{equation}
Q^{+,k}_t(s,a) = \sum_{i=1}^{N}\eta_N^i [r_t(s,a) + V^{k_i}_{t+1}(s^{k_i}_{t+1}) + b_i^{T}] + \frac{H}{(N+1)^M}
\label{eq:q+_recursive_formula}
\end{equation}

We can produce a similar formula for $Q^*$:
\begin{align}
Q^*_t(s,a) &= (r_t + P_t(s,a)V^*_{t+1})(s,a) \notag \\
\intertext{From the Bellman Optimality Equation}
&= \eta_N^0 Q^*_t(s,a) + \sum_{i=1}^N \eta^i_N [r_t(s,a) + P_tV^*_{t+1}(s,a)] \notag \\
\intertext{Since $\sum_{i=1}^N \eta^i_N = 1$ and $\eta_N^0 = 0$ for $N \geq 1$ and $\sum_{i=1}^N \eta^i_N = 0$ and $\eta_N^0 = 1$ for $N = 0$}
&= \eta_N^0 Q^*_t(s,a) + \sum_{i=1}^N \eta^i_N [r_t(s,a) + (P_t - P^{k_i}_t)V^*_{t+1}(s,a) + P^{k_i}_t V^*_{t+1}(s,a)] \notag \\
\intertext{$\pm P^{k_i}_t V^*_{t+1}(s,a)$ inside the summation.}
&= \eta_N^0 Q^*_t(s,a) + \sum_{i=1}^N \eta^i_N [r_t(s,a) + (P_t - P^{k_i}_t)V^*_{t+1}(s,a) + V^*_{t+1}(s^{k_i}_{t+1})] \label{eq:q*_recursive_formula}
\intertext{By definition of $P^k_{t}$} \notag
\end{align}

Subtracting \eqref{eq:q+_recursive_formula} from \eqref{eq:q*_recursive_formula} gives the required result.

\end{proof}

\begin{lemma} 
Adapted slightly from \citep{jin2018q} \\
Bounding $Q^+ - Q^*$ \\
There exists an absolute constant $c > 0$ such that, for any $\delta \in (0,1)$, letting $b_N^T = 2\sqrt{\frac{H^3 \text{log}(SAT/\epsilon)}{N}}$, we have that $\beta^T_N = 2 \sum_{i=1}^N \eta_N^i b^T_i \leq 8 \sqrt{\frac{H^3 \text{log}(SAT/\epsilon)}{N}}$ and, with probability at least $1 - \delta$, the following holds simultaneously for all $(s, a, t, k) \in \mathcal{S} \times \mathcal{A} \times [H] \times [K]$: 
$$0 \leq (Q^{+,k}_t - Q^*_t)(s,a) \leq \sum_{i=1}^N \eta_N^i[(V^{k_i}_{t+1} - V^*_{t+1})(s^{k_i}_{t+1})] + \beta^T_N + \frac{H}{(N+1)^M}$$
where $N=N(s,a,t)$ and $k_1,...,k_N < k$ are the episodes where $(s,a)$ was taken at step $t$.
\label{lemma:Q^+>=Q^*}
\end{lemma}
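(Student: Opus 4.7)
The plan is to mirror the UCB-H argument of Jin et al., with the optimistic-initialisation term replaced by the explicit bonus $H/(N+1)^M$. The key input is Lemma~\ref{lemma:q+-q*}, which decomposes $(Q_t^{+,k} - Q_t^*)(s,a)$ as a linear combination of previous errors at step $t+1$, a weighted transition-martingale, a weighted sum of bonuses $b_i^T$, and the explicit decaying-optimism term. I would prove both directions of the claim on a single high-probability event built via Azuma--Hoeffding.

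The $\beta_N^T$ bound follows directly from the definitions: substituting $b_i^T = 2\sqrt{H^3\log(SAT/p)/i}$ and using the first inequality of Lemma~\ref{lemma:learning_rate} gives
$$\beta_N^T \;=\; 4\sqrt{H^3\log(SAT/p)}\sum_{i=1}^N \frac{\eta_N^i}{\sqrt{i}} \;\leq\; 8\sqrt{\tfrac{H^3\log(SAT/p)}{N}}.$$
To set up the martingale, fix $(s,a,t)$ and let $k_1 < k_2 < \cdots$ index the episodes at which this triple is visited at step $t$. Since $V_{t+1}^*\in[0,H]$, the terms $D_i := \eta_N^i\bigl([P_t V_{t+1}^*](s,a) - V_{t+1}^*(s_{t+1}^{k_i})\bigr)$ form a bounded martingale difference sequence with $|D_i|\leq H\eta_N^i$ and $\sum_{i=1}^N D_i^2 \leq 2H^3/N$ by the second bound of Lemma~\ref{lemma:learning_rate}. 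Azuma then yields, for each fixed $N$, a deviation bound of $2\sqrt{H^3\log(2/\delta)/N}$ with probability $\geq 1-\delta$. Choosing $\delta$ of order $p/(SAT^2)$ and union-bounding over $(s,a,t)$ and over the at most $T$ possible values of $N$ gives a good event $\mathcal{E}$ with $\mathbb{P}(\mathcal{E})\geq 1-p$ on which
$$\Bigl|\sum_{i=1}^N \eta_N^i (P_t - P_t^{k_i}) V_{t+1}^*(s,a)\Bigr| \;\leq\; \sum_{i=1}^N \eta_N^i b_i^T \;=\; \tfrac{1}{2}\beta_N^T$$
simultaneously for all $(s,a,t,k)$ and all $N \geq 1$.

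On $\mathcal{E}$, the upper bound is immediate by substituting the Azuma bound into Lemma~\ref{lemma:q+-q*} (using $\eta_N^0 = 0$ for $N\geq 1$): the explicit bonus sum and the bounded martingale together yield $\beta_N^T$, and the $N=0$ case reduces to $Q_t^{+,k}(s,a) = H$, for which the claim is trivial. For the lower bound I would induct backwards on $t$ from $H+1$ downward. The base case is trivial. For the inductive step, if $Q_{t+1}^{+,k} \geq Q_{t+1}^*$ pointwise on $\mathcal{E}$, then since $V_{t+1}^* \leq H$,
$$V_{t+1}^k \;=\; \min\bigl\{H,\, \max_{a'} Q_{t+1}^{+,k}(\cdot,a')\bigr\} \;\geq\; V_{t+1}^*,$$
so the $V_{t+1}^{k_i} - V_{t+1}^*$ terms in Lemma~\ref{lemma:q+-q*} are nonnegative; the weighted transition-martingale is dominated by $\sum\eta_N^i b_i^T$ on $\mathcal{E}$; and the explicit $H/(N+1)^M$ term is nonnegative slack. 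For $N=0$, the inequality $Q_t^{+,k}(s,a) = C = H \geq Q_t^*(s,a)$ holds directly from the choice of optimism scale.

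The main obstacle will be setting up the union bound cleanly, since $N(s,a,t)$ is itself random while Azuma is applied at a fixed sample size; this is handled by unioning over all possible values $1\leq N \leq T$, which only contributes an extra $\log T$ factor absorbed into $\log(SAT/p)$. Once $\mathcal{E}$ is in place, everything else is a mechanical adaptation of Jin et al.'s UCB-H argument, with $H/(N+1)^M$ taking the role previously played by the fully-overwritten initialisation $H$.
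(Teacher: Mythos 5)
Your proposal is correct and follows essentially the same route as the paper's proof: the same decomposition from Lemma~\ref{lemma:q+-q*}, Azuma applied at each fixed sample size with a union bound over $(s,a,t)$ and the possible values of $N$ to handle the randomness of the visitation count, the weighted-learning-rate bounds from Lemma~\ref{lemma:learning_rate} for $\beta_N^T$, and backward induction on $t$ for the lower bound with the $N=0$ case handled by $Q^{+,k}_t = H \geq Q^*_t$. No gaps worth noting.
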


\begin{proof}

For each fixed $(s,a,t) \in \mathcal{S} \times \mathcal{A} \times [H]$, let $k_0=0$ and
$$k_i = \min( \{k \in [K] | k > k_{i-1}, (s^k_t, a^k_t) = (s,a)\} \cup \{K + 1\} )$$
$k_i$ is then the episode at which $(s,a)$ was taken at step $t$ for the $i$th time, or $k_i = K + 1$ if it has been taken fewer than $i$ times. 

Then the random variable $k_i$ is a \textit{stopping time}.
Let $\mathcal{F}_i$ be the $\sigma$-field generated by all the random variables until episode $k_i$ step $t$.

Let $\tau \in [K]$. 

Let $X_i := \eta^i_{\tau} \mathbbm{1}[k_i \leq K]\ [(P_t^{k_i} - P_t)V^*_{t+1}] (s,a))$. Then $Z_i = \sum_{j=1}^i X_i$ is also a martingale sequence with respect to the filtration $(\mathbbm{1}[k_i \leq K]\ [(P_t^{k_i} - P_t)V^*_{t+1}] (s,a))_{i=1}^\tau$, $Z_0 = 0$, $Z_n - Z_0 = \sum_{i=1}^n X_i$ and $Z_{i} - Z_{i-1} = X_i$.
We also have that $|X_i| \leq \eta^i_\tau H$.

Then by Azuma's Inequality we have that with probability at least $1 - 2 \delta / (S A H K)$
\begin{align}
|\sum_{i=1}^\tau \eta_\tau^i \mathbbm{1}[k_i \leq K]\ [(P_t^{k_i} - P_t)V^*_{t+1}] (s,a)| &\leq \sqrt{2(\sum_{i=1}^{\tau}(\eta_\tau^i H)^2) \log{\frac{SAT}{\delta}}} \notag \\
&= H \sqrt{2(\sum_{i=1}^{\tau}(\eta_\tau^i)^2) \log{\frac{SAT}{\delta}}} \notag \\
\intertext{Then by a Union bound over all $\tau \in [K]$, we have that with probability at least $1 - 2\delta / (S A H)$:}
\forall \tau \in [K]\ |\sum_{i=1}^\tau \eta_\tau^i \mathbbm{1}[k_i \leq K]\ [(P_t^{k_i} - P_t)V^*_{t+1}] (s,a)| &\leq H \sqrt{2(\sum_{i=1}^{\tau}(\eta_\tau^i)^2) \log{\frac{SAT}{\delta}}} \notag \\
&\leq 2\sqrt{\frac{H^3 \log{SAT/\delta}}{\tau}}  \label{eq:stochastic_transition_ineq}
\intertext{Since From Lemma \ref{lemma:learning_rate} we have that $\sum_{i=1}^{\tau}(\eta^i_N)^2 \leq \frac{2H}{\tau}$} \notag
\end{align}

Since inequality \eqref{eq:stochastic_transition_ineq} holds for all fixed $\tau \in [K]$ uniformly, it also holds for a random variable $\tau$ $\tau=N=N^k(s,a,t) \leq K$. Also note that $\mathbbm{1}[k_i \leq K] = 1$ for all $i \leq N$.

We can then additionally apply a union bound over all $s \in \mathcal{S}, a \in \mathcal{A}, t \in [H]$ to give:
\begin{equation}
\label{eq:full_stochastic_transition_bound}
|\sum_{i=1}^N \eta_N^i [(P_t^{k_i} - P_t)V^*_{t+1}] (s,a)| \leq 2\sqrt{\frac{H^3 \log{SAT/\delta}}{N}} = b^T_N
\end{equation}
which holds with probability $1 - 2\delta$ for all $(s,a,t,k) \in \mathcal{S} \times \mathcal{A} \times [H] \times [K]$. We then rescale $\delta$ to $\delta / 2$. 

By Lemma \ref{lemma:learning_rate} we have that $b^T_N = 2\sqrt{H^3 \log{(SAT/\delta)} / N} \leq \beta_N^T/2 = \sum_{i=1}^N \eta_N^i b^T_i \leq 4\sqrt{H^3 \log{(SAT/\delta)} / N} = 2b^T_N$.

From Lemma \ref{lemma:q+-q*} we have that:
\begin{align*}
(Q^{+,k}_t - Q^*_t)(s,a) &= -\eta^0_N Q^*_t(s,a) \notag \\
&+ \sum_{i=1}^N \eta_N^i[(V^{k_i}_{t+1} - V^*_{t+1})(s^{k_i}_{t+1}) + (P_t - P^{k_i}_t)V^*_{t+1}(s,a) + b_i^{T}] + \frac{H}{(N+1)^M} \\
&= - \eta^0_N Q^*_t(s,a) + \sum_{i=1}^N \eta_N^i[(V^{k_i}_{t+1} - V^*_{t+1})(s^{k_i}_{t+1})] \notag \\
&+ \sum_{i=1}^N \eta_N^i[(P_t - P^{k_i}_t)V^*_{t+1}(s,a)] + \sum_{i=1}^N \eta_N^i[b_i^{T}] + \frac{H}{(N+1)^M} \\
\intertext{rearranging terms}
&\leq \sum_{i=1}^N \eta_N^i[(V^{k_i}_{t+1} - V^*_{t+1})(s^{k_i}_{t+1})] + b^T_N + \beta^T_N/2 + \frac{H}{(N+1)^M} \\
\intertext{From Eqn \eqref{eq:full_stochastic_transition_bound}, defn on $\beta^T_N$ and non-negativity of $Q^*$}
&\leq \sum_{i=1}^N \eta_N^i[(V^{k_i}_{t+1} - V^*_{t+1})(s^{k_i}_{t+1})] + \beta^T_N + \frac{H}{(N+1)^M} \\
\intertext{Since $b^T_N \leq \beta^T_N/2$}
\end{align*}

which gives the R.H.S.

Eqn \eqref{eq:full_stochastic_transition_bound} tells us that $\sum_{i=1}^N \eta_N^i [(P_t - P_t^{k_i})V^*_{t+1}] (s,a) \geq -b^T_N$, along with $b^T_N \leq \beta_N^T/2 = \sum_{i=1}^N \eta_N^i b^T_i$ which then gives:
\begin{align*}
(Q^{+,k}_t - Q^*_t)(s,a) &= - \eta^0_N Q^*_t(s,a) \notag \\
&+ \sum_{i=1}^N \eta_N^i[(V^{k_i}_{t+1} - V^*_{t+1})(s^{k_i}_{t+1}) + (P_t - P^{k_i}_t)V^*_{t+1}(s,a) + b_i^{T}] + \frac{H}{(N+1)^M} \\
&\geq - \eta^0_N Q^*_t(s,a) + \sum_{i=1}^N \eta_N^i[(V^{k_i}_{t+1} - V^*_{t+1})(s^{k_i}_{t+1})] + \frac{H}{(N+1)^M}
\end{align*}

We can then prove the L.H.S. by induction on $t=H,...,1$.
For $t=H (Q^{+,k}_t - Q^*_t) \geq 0$. This is because $V^{k_i}_{H+1}=V^{*}_{H+1}=0$, for $N=0 \frac{H}{(0 + 1)^M} = H > Q^*_{H}$, and for $N>0$ we have that $\eta^0_N = 0$. 
If we assume the statement true for $t+1$, consider $t$:
\begin{align*}
(Q^{+,k}_t - Q^*_t)(s,a) &\geq - \eta^0_N Q^*_t(s,a) + \sum_{i=1}^N \eta_N^i[(V^{k_i}_{t+1} - V^*_{t+1})(s^{k_i}_{t+1})] + \frac{H}{(N+1)^M} \\
&= -\eta^0_N Q^*_t(s,a) + \sum_{i=1}^N \eta_N^i[\min\{H, \max_{a'}Q^{+,k_i}_{t+1}(s^{k_i}_{t+1}, a')\} - \max_{a''}Q^*_{t+1}(s^{k_i}_{t+1}, a'')] \notag \\
&+ \frac{H}{(N+1)^M} \\
&\geq -\eta^0_N Q^*_t(s,a) + \frac{H}{(N+1)^M}
\intertext{If $\min\{H, \max_{a'}Q^{+,k_i}_{t+1}(s^{k_i}_{t+1}, a')\} = \max_{a'}Q^{+,k_i}_{t+1}(s^{k_i}_{t+1}, a')$, by our inductive assumption since $(Q^{+,k}_{t+1} - Q^*_{t+1})(s,a) \geq 0 \implies \max_{a'}Q^{+,k_i}_{t+1}(s, a') \geq \max_{a''}Q^*_{t+1}(s, a'')$}
\intertext{If $\min\{H, \max_{a'}Q^{+,k_i}_{t+1}(s^{k_i}_{t+1}, a')\} = H$, we have that $\max_{a''}Q^*_{t+1}(s^{k_i}_{t+1}, a'') \leq H$.}
\end{align*}

This then proves the L.H.S.

\end{proof}

\textbf{Note on stochastic rewards:}
We have assumed so far that the reward function is deterministic for a simpler presentation of the proof.
If we allowed for a stochastic reward function, the previous lemmas can be easily adapted to allow for it.

Lemma 2 would give us:
\begin{equation*}
\begin{split}
(Q^{+,k}_t - Q^*_t)(s,a) & = -\eta^0_N Q^*_t(s,a) + \sum_{i=1}^N \eta_N^i[(r^{k_i}_t -\E_{r' \sim R_t(\cdot|s,a)}[r']) \\ 
&+ (V^{k_i}_{t+1} - V^*_{t+1})(s^{k_i}_{t+1}) + (P_t - P^{k_i}_t)V^*_{t+1}(s,a) + b_i^{T}] + \frac{H}{(N+1)^M}
\end{split}
\end{equation*}

$(r^{k_i}_t -\E_{r' \sim R_t(\cdot|s,a)}[r'])$ can then be bounded in the same way as in Lemma 3. 
Increasing the constants for $b^T_N, \beta^T_N$ and rescaling of $\delta$ appropriately then completes the necessary changes.
\par
We will now prove that our algorithm is provably efficient. We will do this by showing it has sub-linear regret, following closely the proof of Theorem 1 from \citep{jin2018q}.

\augmentedQPacmdp*

\begin{proof}

Let 
\[
    \delta^k_t := (V^k_t - V^{\pi_k}_t)(s^k_t), \phi^k_t := (V^k_t - V^*_t)(s^k_t)
\]

Lemma \ref{lemma:Q^+>=Q^*} tells us with probability at least $1-\delta$ that $Q^{+,k}_t \geq Q^*_t$, which also implies that $V^{k}_t \geq V^*_t$. We can then upperbound the regret as follows:
\[
\text{Regret}(K) = \sum_{k=1}^K (V^*_1 - V^{\pi_k}_1)(s_1^k) \leq \sum_{k=1}^K (V^k_1 - V^{\pi_k}_1)(s_1^k) = \sum_{k=1}^K \delta_1^k
\]
We then aim to bound $\sum_{k=1}^K \delta_t^k$ by the next timestep $\sum_{k=1}^K \delta_{t+1}^k$, which will give us a recursive formula to upperbound the regret. We will accomplish this by relating $\sum_{k=1}^K \delta_t^k$ to $\sum_{k=1}^K \phi_t^k$.

For any fixed $(k,t) \in [K] \times [H]$, let $N_k = N(s^k_t, a^k_t, t)$ where $(s^k_t, a^k_t)$ was previously taken at step $t$ of episodes $k_1,...,k_N < k$. We then have:
\begin{align}
\delta^k_t = (V^k_t - V^{\pi_k}_t)(s_t^k) &\leq (Q^{+,k}_t - Q^{\pi_k}_t)(s_t^k, a_t^k) \notag \\
&= (Q^{+,k}_t - Q^{*}_t)(s_t^k, a_t^k) + (Q^{*}_t - Q^{\pi_k}_t)(s_t^k, a_t^k) \notag \\
&\leq \sum_{i=1}^{N_k} \eta_{N_k}^{i}[(V^{k_i}_{t+1} - V^*_{t+1})(s^{k_i}_{t+1})] \notag \\
&+ \beta^{T}_{N_k} + \frac{H}{(N_k+1)^M} + [P_t(V^*_{t+1} - V^{\pi_k}_{t+1})](s^k_t, a^k_t) \notag \\
\intertext{By Lemma \ref{lemma:Q^+>=Q^*} for the first term. Bounding $(Q^*_t - Q^{\pi_k}_t)$ is acheived through the Bellman Equation giving the final term.}
&= \sum_{i=1}^{N_k} \eta_{N_k}^{i} \phi^{k_i}_{t+1}  + \beta^{T}_{N_k} + \frac{H}{(N_k+1)^M} + [P_t(V^*_{t+1} - V^{\pi_k}_{t+1})](s^k_t, a^k_t) \notag \\ &+ [P^k_t(V^*_{t+1} - V^{\pi_k}_{t+1})](s^k_t, a^k_t) - [P^k_t(V^*_{t+1} - V^{\pi_k}_{t+1})](s^k_t, a^k_t) \notag \\
\intertext{$\pm [P^k_t(V^*_{t+1} - V^{\pi_k}_{t+1})](s^k_t, a^k_t)$ and subbing for $\phi$}
&= \sum_{i=1}^{N_k} \eta_{N_k}^{i} \phi^{k_i}_{t+1}  + \beta^{T}_{N_k} + \frac{H}{(N_k+1)^M} + [(P_t - P^k_t)(V^*_{t+1} - V^{\pi_k}_{t+1})](s^k_t, a^k_t) \notag \\ &+ (V^*_{t+1} - V^{\pi_k}_{t+1})(s^k_{t+1}) \notag \\
\intertext{Since $[P^k_t(V^*_{t+1} - V^{\pi_k}_{t+1})](s^k_t, a^k_t) = (V^*_{t+1} - V^{\pi_k}_{t+1})(s^k_{t+1})$}
&= \sum_{i=1}^{N_k} \eta_{N_k}^{i} \phi^{k_i}_{t+1}  + \beta^{T}_{N_k} + \frac{H}{(N_k+1)^M} + \xi^k_{t+1} + \delta^k_{t+1} - \phi^k_{t+1} \label{eq:regret_delta} \\
\intertext{Letting $\xi^k_{t+1}:= [(P_t - P^k_t)(V^*_{t+1} - V^{\pi_k}_{t+1})](s^k_t, a^k_t). (V^*_{t+1} - V^{\pi_k}_{t+1})(s^k_{t+1}) = \delta^k_{t+1} - \phi^k_{t+1}$.} \notag
\end{align}

We must now bound the summation $\sum_{k=1}^K \delta_t^k$, which we will do by considering each term of \eqref{eq:regret_delta} separately. 

$\sum_{k=1}^K \frac{H}{(N_k+1)^M}$: 
\[
    \sum_{k=1}^K \frac{H}{(N_k+1)^M} \geq \sum_{k=1}^K \frac{H}{(K+1)^M} = \frac{H K}{(K+1)^M} \implies \sum_{k=1}^K \frac{H}{(N_k+1)^M} \in \Omega(HK^{1-M})
\]
The first inequality follows since $N \leq K$.
This shows that we require $M>0$ in order to guarantee a sublinear regret in $K$.
\[
\begin{split}
    \sum_{k=1}^K \frac{H}{(N_k+1)^M} = \sum_{k=1}^K  \sum_{n=0}^\infty \mathbbm{1}\{N_k = n\} \frac{H}{(n+1)^M}
    &= \sum_{n=0}^\infty \sum_{k=1}^K \mathbbm{1}\{N_k = n\} \frac{H}{(n+1)^M} \\
    & \leq SA \sum_{n=0}^\infty \frac{H}{(n+1)^M} = SAH \sum_{n=1}^K \frac{1}{n^M} 
\end{split}
\]
The first equality follows by rewriting $\frac{H}{(N_k + 1)^M}$ as $\sum_{n=0}^\infty \mathbbm{1}\{N_k = n\} \frac{H}{(n+1)^M}$. Only one of the indicator functions will be true for episode $k$, giving us the required value. 
The inequality is a crude upper bound that suffices for the proof. For a given $n$, $\mathbbm{1}\{N_k = n\}$ can only be true for at most $SA$ times across all of training. They will contribute $\frac{H}{(n+1)^M}$ to the final sum.

For $M>1$, the sum $\sum_{n=1}^K \frac{1}{n^M}$ is bounded, which means that $\sum_{k=1}^K \frac{H}{(N_k+1)^M} \in \mathcal{O}(SAH)$

For $M=1$, $\sum_{n=1}^K \frac{1}{n^M} \leq \log(K+1) \implies \sum_{k=1}^K \frac{H}{(N_k+1)^M} \in \mathcal{O}(SAH\log(K))$

For $0 < M < 1$, $\sum_{k=1}^K \frac{H}{(N_k+1)^M} \in \mathcal{O}(SAHK^{1-M})$, since $\sum_{n=1}^K \frac{1}{n^M} \sim \frac{K^{1-M}}{1-M}$ \citep{goel1987note}

$\sum_{k=1}^K\sum_{i=1}^{N_k} \eta_{N_k}^{i} \phi^{k_i}_{t+1}$:
\[
\sum_{k=1}^K\sum_{i=1}^{N_k} \eta_{N_k}^i \phi^{k_i}_{t+1} \leq \sum_{k'=1}^K \phi^{k'}_{t+1} \sum_{i=N_{k'}+1}^\infty \eta_{i}^{N_{k'}} \leq (1 + \frac{1}{H})\sum_{k=1}^K \phi^k_{t+1}
\]

The first inequality is achieved by rearranging how we take the sum. 

Consider a $k' \in [K]$. The term $\phi^{k'}_{t+1}$ will appear in the summation for $k > k'$ provided that $(s^k_t,a^k_t) = (s^{k'}_t, a^{k'}_t)$, e.g. we took the same state-action pair at episodes $k$ and $k'$. 

On the first occasion, the associated learning rate will be $\eta^{N_{k'}}_{N_{k'}+1}$, on the second $\eta^{N_{k'}}_{N_{k'}+2}$. 

We can then consider all possible counts up to $\infty$ to achieve an inequality. By considering all $k' \in [K]$ we will not miss any terms of the original summation.

The second inequality follows by application of Lemma \ref{lemma:learning_rate} on the sum involving the learning rate.

$\sum_{k=1}^K \beta_{N_k}$:

For all $t \in [H]$, we have that:
\begin{align*}
   \sum_{k=1}^K \beta_{N_k} \leq \sum_{k=1}^K 8\sqrt{\frac{H^3\log(SAT/\delta)}{N_k}} &= \sum_{s,a} \sum_{n=1}^{N_k(s,a,t)} 2\sqrt{\frac{H^3\log(SAT/\delta)}{n}} \\ 
   &\leq \sum_{s,a} \sum_{n=1}^{N_k(s,a,t) = \frac{K}{SA}} 2\sqrt{\frac{H^3\log(SAT/\delta)}{n}} \\
   &\leq \sum_{s,a} 2\sqrt{H^3\log(SAT/\delta)} \sum_{n=1}^{\frac{K}{SA}} \frac{1}{\sqrt{n}} \\
   &\leq 4SA\sqrt{H^3\log(SAT/\delta)} \sqrt{\frac{K}{SA}}\\
   &\in \mathcal{O}(\sqrt{H^3SAK\log(SAT/\delta)}) \\
   &= \mathcal{O}(\sqrt{H^2SAT\log(SAT/\delta)})
\end{align*}
We transform the sum over the episode's counts $N_k$ into a sum over all state-action pairs, summing over all of their respective counts. 

The first inequality is due to $N_k = K / SA$ maximising the quantity, this is since $1/\sqrt{n}$ grows smaller as $n$ increases. Thus, we want to \textit{spread out} the summation across the state-action space as much as possible.

Then we use the result that $\sum_{i=1}^n 1/\sqrt{i} \leq 2\sqrt{n}$, and that we are taking the sum over $SA$ identical terms.
We then set $n=K/SA$ and are thus taking the sum over $N_k(s,a,t)$ identical terms. 

$\sum_{t=1}^H \sum_{k=1}^K \xi_{t+1}^k$

Following a similar argument as from Lemma \ref{lemma:Q^+>=Q^*}, we can apply Azuma's inequality and a union bound over all $(s,a,t) \in S \times A \times [H]$. Let $X_i = [(P_t - P^k_t)(V^*_{t+1} - V^{\pi_k}_{t+1})](s^k_t, a^k_t)$. Then $Z_i := \sum_{j=1}^i X_j$ is a martingale difference sequence, and we have that $|Z_i - Z_{i-1}| = |X_i| \leq H$, and $Z_0 = 0$.
Then with probability at least $1-2\delta$ we have that:
\[
    |\sum_{t=1}^H \sum_{k=1}^K \xi^k_{t+1}| \leq H |\sum_{k=1}^K \xi^k_{t+1}| \leq H\sqrt{2KH^2\log(SAT/\delta)} \in \mathcal{O}(H^{3/2}\sqrt{T\log(SAT/\delta)})
\]

Finally, we can utilise these intermediate results when bounding the regret via Equation \eqref{eq:regret_delta}:
\begin{align*}
    \sum_{k=1}^K \delta_t^k &\leq \sum_{i=1}^{N_k} \eta_{N_k}^{i} \phi^{k_i}_{t+1}  + \beta^{T}_{N_k} + \frac{H}{(N_k+1)^M} + \xi^k_{t+1} + \delta^k_{t+1} - \phi^k_{t+1} \\
    &\leq (1 + \frac{1}{H}) \sum_{k=1}^K \phi^k_{t+1} - \sum_{k=1}^K \phi^k_{t+1} + \sum_{k=1}^K \delta^k_{t+1} + \sum_{k=1}^K (\beta^{T}_{N_k} + \xi^k_{t+1}) + \sum_{k=1}^K \frac{H}{(N_k+1)^M} \\
    \intertext{From our intermediate result on $\phi$}
    &\leq (1 + \frac{1}{H}) \sum_{k=1}^K \delta^k_{t+1} + \sum_{k=1}^K (\beta^{T}_{N_k} + \xi^k_{t+1} + \frac{H}{(N_k+1)^M})\\
    \intertext{Since $V^{*} \geq V^{\pi_k} \implies \delta^k_{t+1} \geq \phi^k_{t+1}$}
\end{align*}

Letting $A_t := \sum_{k=1}^K (\beta^{T}_{N_k} + \xi^k_{t+1} + \frac{H}{(N_k+1)^M})$, we have that $\sum_{k=1}^K \delta^k_H \leq A_H$ since $\delta_{H+1} = 0$. By induction on $t=H,...,1$ we then have that $\sum_{k=1}^K \delta^k_t \leq \sum_{j=0}^{H-t} (1+\frac{1}{H})^j A_{H-j} \implies \sum_{k=1}^K \delta^k_1  \in \mathcal{O}(\sum_{t=1}^H \sum_{k=1}^K (\beta^{T}_{N_k} + \xi^k_{t+1} + \frac{H}{(N_k+1)^M}))$ since $(1 + 1/H)^H \leq e$.

Using our intermediate results for the relevant quantities then gives us with probability at least $1-2\delta$:
\begin{align*}
    \sum_{k=1}^K \delta^k_1 &\in \mathcal{O}(\sqrt{(H^4SAT\log(SAT/\delta)}) + H^{3/2}\sqrt{T\log(SAT/\delta)}) + \mathcal{O}(\sum_{t=1}^H \sum_{k=1}^K \frac{H}{(N_k+1)^M}) \\
    &= \mathcal{O}(\sqrt{(H^4SAT\log(SAT/\delta)})) + \mathcal{O}(\sum_{t=1}^H \sum_{k=1}^K \frac{H}{(N_k+1)^M})
\end{align*}

We then consider the 3 cases of $M$:

For $M>1$ we get $\mathcal{O}(\sqrt{(H^4SAT\log(SAT/\delta)}) + H^2SA)$.

For $M=1$ we get $\mathcal{O}(\sqrt{(H^4SAT\log(SAT/\delta)}) + H^2SA\log(K))$.

And for $0 < M < 1$ we have $\mathcal{O}(\sqrt{(H^4SAT\log(SAT/\delta)}) + H^{1+M}SAT^{1-M})$.

$T \geq \sqrt{H^4SAT\log(SAT/\delta)} \implies \sqrt{H^4SAT\log(SAT/\delta)} \geq H^2SA\log(K)$, which means we can remove the $H^2SA\log(K)$ or $H^2SA$ from the upperbound. If $T \leq \sqrt{H^4SAT\log(SAT/\delta)}$ then that is also a sufficient upper bound since the regret cannot be more than $HK = T$.

This gives us the required result after rescaling $\delta$ to $\delta/2$.  

\end{proof}

\end{document}